\DeclareRobustCommand{\eg}{e.g.,\@\xspace}
\DeclareRobustCommand{\ie}{i.e.,\@\xspace}
\newcommand{\mdp}{\mathcal{M}}
\newcommand{\Aspace}{\mathcal{A}}
\newcommand{\Sspace}{\mathcal{S}}
\newcommand{\tran}{p}
\newcommand{\rew}{r}
\newcommand{\isd}{\mu}
\newcommand{\horizon}{H}
\newcommand{\leftnodes}{\mathcal{X}}
\newcommand{\rightnodes}{\mathcal{Y}}
\newcommand{\scopes}{z}
\newcommand{\sparseness}{Z}
\newcommand{\numfeatures}{N}
\newcommand{\fmdp}{\mathcal{F}}
\newcommand{\numep}{K}
\newcommand{\BR}{\mathcal{BR}}
\newcommand{\fact}{\mathcal{Z}}
\newcommand{\scopevar}{Z}
\newcommand{\history}{\mathcal{H}_k}
\newcommand{\Reals}{\mathbb{R}}
\newcommand{\G}{\mathcal{G}}
\newcommand{\Z}{\mathcal{Z}}
\DeclareMathOperator*{\argmax}{arg\,max}
\DeclareMathOperator*{\EV}{\mathbb{E}}
\DeclareMathOperator{\PR}{\mathbb{P}}
\newcommand{\vomega}{\bm{\omega}}
\newcommand{\valpha}{\bm{\alpha}}
\newcommand{\de}{\mathrm{d}}
\declaretheorem[numberwithin=section]{thm}
\declaretheorem[sibling=thm]{lemma}
\declaretheorem[]{definition}
\declaretheorem[]{proposition}
\definecolor{mygreen}{rgb}{0.0, 0.5, 0.0}
\definecolor{myviolet}{rgb}{0.6, 0.4, 0.8}
\newcommand{\nb}[3]{{\colorbox{#2}{\bfseries\sffamily\scriptsize\textcolor{white}{#1}}}{\textcolor{#2}{\sf\small\textit{#3}}}}
\newcommand{\am}[1]{\nb{Alex}{gray}{#1}}
\newcommand\margincomment[4]
\newcommand{\AlgNameLong}{Causal PSRL\xspace}
\newcommand{\AlgNameShort}{C-PSRL\xspace}
\newcommand{\AlgNameDef}{\textbf{C}ausal \textbf{PSRL} (C-PSRL)}
\newcounter{relctr} 
\everydisplay\expandafter{\the\everydisplay\setcounter{relctr}{0}} 
\newcommand\labelrel[2]{%
  \begingroup
    \refstepcounter{relctr}%
    \stackrel{\textnormal{(\arabic{relctr})}}{\mathstrut{#1}}%
    \originallabel{#2}%
  \endgroup
}
\title{Exploiting Causal Graph Priors with Posterior Sampling for Reinforcement Learning}
\newcommand{\ucomma}{%
  \leavevmode 
  \llap{,}\spacefactor\sfcode`,
  \space
}
\author{Mirco Mutti\textsuperscript{$1$}\thanks{Work done while the author was at Politecnico di Milano.~~~\textsuperscript{$\dagger$}Joint senior-authorship.}, Riccardo De Santi\textsuperscript{$2 3$}\ucomma Marcello Restelli\textsuperscript{$4$}\ucomma Alexander Marx\textsuperscript{$2$}\footnotemark{}, Giorgia Ramponi\textsuperscript{$3$}\footnotemark[2]{} \\
\textsuperscript{$1$}Technion, \textsuperscript{$2$}ETH Zurich, \textsuperscript{$3$}ETH AI Center, \textsuperscript{$4$}Politecnico di Milano \\
\texttt{mirco.m@technion.ac.il, rdesanti@ethz.ch, marcello.restelli@polimi.it} \\ \texttt{alexander.marx@ai.ethz.ch, giorgia.ramponi@ai.ethz.ch}
}
\begin{document}

\maketitle

\begin{abstract}
Posterior sampling allows exploitation of prior knowledge on the environment's transition dynamics to improve the sample efficiency of reinforcement learning. The prior is typically specified as a class of parametric distributions, the design of which can be cumbersome in practice, often resulting in the choice of uninformative priors. In this work, we propose a novel posterior sampling approach in which the prior is given as a (partial) causal graph over the environment's variables. The latter is often more natural to design, such as listing known causal dependencies between biometric features in a medical treatment study. Specifically, we propose a hierarchical Bayesian procedure, called C-PSRL, simultaneously learning the full causal graph at the higher level and the parameters of the resulting factored dynamics at the lower level. We provide an analysis of the Bayesian regret of C-PSRL that explicitly connects the regret rate with the degree of prior knowledge. Our numerical evaluation conducted in illustrative domains confirms that C-PSRL strongly improves the efficiency of posterior sampling with an uninformative prior while performing close to posterior sampling with the full causal graph.
\end{abstract}

\addtocontents{toc}{\protect\setcounter{tocdepth}{-1}}
\section{Introduction}
Posterior sampling~\citep{thompson1933}, a.k.a. Thompson sampling, is a powerful alternative to classic optimistic methods for Reinforcement Learning~\citep[RL,][]{sutton2018reinforcement} as it guarantees outstanding sample efficiency~\citep{osband2013more} through an explicit model of the epistemic uncertainty that allows exploiting prior knowledge over the environment's dynamics.
Specifically, Posterior Sampling for Reinforcement Learning \citep[PSRL,][]{strens2000bayesian, osband2013more} implements a Bayesian procedure 
in which, at every episode $k$, (1) a model of the environment's dynamics is sampled from a parametric prior distribution $P_k$, (2) an optimal policy $\pi_k$ is computed (e.g., through value iteration~\citep{bellman1957dynamic}) according to the sampled model, (3) a posterior update is performed on the prior parameters to incorporate in $P_{k + 1}$ the evidence collected by running $\pi_k$ in the true environment.
Under the assumption that the true environment's dynamics are sampled with positive probability from the prior $P_0$, the latter procedure is provably efficient as it showcases a Bayesian regret that scales with $O(\sqrt{\numep})$ being $\numep$ the total number of episodes~\citep{osband2017posterior}.

Although posterior sampling has been also praised for its empirical prowess~\citep{chapelle2011empirical}, specifying the prior through a class of parametric distributions, a crucial requirement of PSRL, can be cumbersome in practice. Let us take a Dynamic Treatment Regime~\citep[DTR,][]{murphy2003optimal} as an illustrative application. 
Here, we aim to overcome a patient's disease by choosing, at each stage, a treatment based on the 
patient's evolving conditions and previously administered treatments. The goal is to identify the best treatment for the specific patient quickly. Medicine provides plenty of prior knowledge to help solve the DTR problem. However, it is not easy to translate this knowledge into a parametric prior distribution that is general enough to include the model of any patient while being sufficiently narrow to foster efficiency. Instead, it is remarkably easy to list some known causal relationships between patient's state variables, such as heart rate and blood pressure, or diabetes and glucose level. Those causal edges might come from experts' knowledge (\eg physicians) or previous clinical studies.
A prior in the form of a causal graph is more natural to specify for practitioners, who might be unaware of the intricacies of Bayesian statistics. Posterior sampling does not currently support the specification of the prior through a causal graph, which limits its applicability.

This paper proposes a novel posterior sampling methodology that can exploit a prior specified through a partial causal graph over the environment's variables. Notably, a complete causal graph allows for a factorization of the environment's dynamics, which can be then expressed as a Factored Markov Decision Process~\citep[FMDP,][]{boutilier2000stochastic}. PSRL can be applied to FMDPs, as demonstrated by previous work~\citep{osband2014fmdp}, where the authors assume to know the complete causal graph. However, this assumption is often unreasonable in practical applications.\footnote{DTR is an example, where several causal relations affecting patient's conditions remain a mystery.}


Instead, we assume to have partial knowledge of the causal graph, which leads to considering a set of plausible FMDPs. 
Taking inspiration from~\citep{hong2020latent, hong2022thompson, hong2022hierarchical, kveton2021meta}, we design a hierarchical Bayesian procedure, called \AlgNameDef, extending PSRL to the setting where the true model lies within a set of FMDPs (induced by the causal graph prior). At each episode, \AlgNameShort first samples a factorization consistent with the causal graph prior.
Then, it samples the model of the FMDP from a lower-level prior that is conditioned on the sampled factorization. After that, the algorithm proceeds similarly to PSRL on the sampled FMDP. 

Having introduced \AlgNameShort, we study the Bayesian regret it induces on the footsteps of previous analyses for PSRL in FMDPs~\citep{osband2014fmdp} and hierarchical posterior sampling~\citep{hong2022thompson}. Our analysis shows that \AlgNameShort takes the best of both worlds by avoiding a direct dependence on the number of states in the regret (as in FMDPs) and without requiring a full causal graph prior (as in hierarchical posterior sampling). Moreover, we can analytically capture the dependency of the Bayesian regret on the number of causal edges known a priori and encoded in the (partial) causal graph prior.
Finally, we empirically validate \AlgNameShort against two relevant baselines: PSRL with an uninformative prior, \ie that does not model potential factorizations in the dynamics, and PSRL equipped with the full knowledge of the causal graph (an oracle prior). We carry out the comparison in simple yet illustrative domains, which show that exploiting a causal graph prior improves efficiency over uninformative priors while being only slightly inferior to the oracle prior.

In summary, the main contributions of this paper include the following:
\vspace{-5pt}
\begin{itemize}[noitemsep,topsep=0pt,parsep=0pt,partopsep=0pt,leftmargin=*]
    \item A novel problem formulation that links PSRL with a prior expressed as a partial causal graph to the problem of learning an FMDP with unknown factorization (Section~\ref{sec:formulation});
    \item A methodology (\AlgNameShort) that extends PSRL to exploit a partial causal graph prior (Section~\ref{sec:algorithm});
    \item The analysis of the Bayesian regret of \AlgNameShort, which is $\tilde{O} (\sqrt{\numep / 2^{\eta}})$\footnote{We report regret rates with the common ``Big-O'' notation, in which $\tilde O$ hides logarithmic factors. Note that the rate here is simplified to highlight the most relevant factors. A complete rate can be found in Theorem~\ref{thr:bayesian_regret_theorem}.} where $\numep$ is the total number of episodes and $\eta$ is the degree of prior knowledge  (Section~\ref{sec:regret_analysis});
    \item An ancillary result on causal discovery that shows how a (sparse) super-graph of the true causal graph can be extracted from a run of C-PSRL as a byproduct (Section~\ref{sec:causal_identification});
    \item An experimental evaluation of the performance of \AlgNameShort against PSRL with uninformative or oracle priors in illustrative domains (Section~\ref{sec:experiments}).
\end{itemize}
Finally, the aim of this work is to enable the use of posterior sampling for RL in relevant applications through a causal perspective on prior specification. We believe this contribution can help to close the gap between PSRL research and actual adoption of PSRL methods in real-world problems.

\section{Problem formulation}
\label{sec:formulation}
In this section, we first provide preliminary background on graphical causal models (Section~\ref{sec:formulation_graphs}) and Markov decision processes (Section~\ref{sec:formulation_mdp}). Then, we explain how a causal graph on the variables of a Markov decision process induces a factorization of its dynamics (Section~\ref{sec:formulation_fmdp}). Finally, we formalize the reinforcement learning problem in the presence of a causal graph prior (Section~\ref{sec:formulation_problem}).

\textbf{Notation.}~~~With few exceptions, we will denote a set or space as $\mathcal{A}$, their elements as $a \in \mathcal{A}$, constants or random variables with $A$, and functions as $f$. We denote $\Delta (\mathcal{A})$ the probability simplex over $\mathcal{A}$, and $[A]$ the set of integers $\{ 1, \ldots, A \}$.
For a $d$-dimensional vector $x$, we define the \emph{scope operator} $x[\mathcal{I}] := \bigotimes_{i \in \mathcal{I}} x_i$ for any set $\mathcal{I} \subseteq [d]$. When $\mathcal{I} = \{ i \}$ is a singleton, we use $x[i]$ as a shortcut for  $x[\{i\}]$. A recap of the notation, which is admittedly involved, can be found in Appendix~\ref{sec:notation}. 

\subsection{Causal graphs}
\label{sec:formulation_graphs}

Let $\leftnodes = \{ X_j \}_{j = 1}^{d_X}$ and $ \rightnodes = \{ Y_j \}_{j = 1}^{d_Y}$ be sets of random variables taking values $x_j, y_j \in [\numfeatures]$ respectively, and let $p: \leftnodes \to \Delta (\rightnodes)$ a strictly positive probability density.
Further, let $\G = (\leftnodes, \rightnodes, z)$ be a bipartite Directed Acyclic Graph (DAG), or \emph{bigraph}, having left variables $\leftnodes$, right variables $\rightnodes$, and a set of edges $z \subseteq \leftnodes \times \rightnodes$.
We denote as $z_j$ the parents of the variable $Y_j \in \rightnodes$, such as $z_j = \{ i \in [d_X] \ | \ (X_i, Y_j) \in z \}$ and $z = \bigcup_{j \in [d_Y]} \bigcup_{i \in z_j} \{ (X_i, Y_j) \}$. We say that $\G$ is $\sparseness$-sparse if $\max_{j \in [d_Y]} |z_j| \leq \sparseness \leq d_X$, and we call $\sparseness$ the \emph{degree of sparseness} of $\G$.

The tuple $(p, \G)$ is called a \emph{graphical causal model} \citep{pearl2009causality} if $p$ fulfills the Markov factorization property with respect to $\G$, that is $p(\leftnodes, \rightnodes) = p (\leftnodes) p(\rightnodes | \leftnodes) = p(\leftnodes) \prod_{j \in [d_Y]} p_j (y[j] | x[z_j])$ and all interventional distributions are well defined.\!\footnote{Both assumptions come naturally with factored MDPs (see Section~\ref{sec:formulation_fmdp}). Informally, an intervention on a node $Y_j \in \rightnodes$ (or in $\leftnodes$) assigns $y[j]$ to a constant $c$, \ie $p_j(y[j] = c) = 1$. All mechanisms $p_i$, where $i$ is not intervened on, remain invariant. For more details on the causal notation we refer to \citep[Chapter~1]{pearl2009causality}.} Note that the causal model that we consider in this paper does not admit \emph{confounding}. Further, we can exclude ``vertical'' edges in $\rightnodes \times \rightnodes$ and directed edges $\rightnodes \times \leftnodes$. Finally, we call \emph{causal graph} the component $\G$ of a graphical causal model.

\subsection{Markov decision processes}
\label{sec:formulation_mdp}

A finite episodic Markov Decision Process~\citep[MDP,][]{puterman2014markov} is defined throug the tuple $\mdp := (\Sspace, \Aspace, p, r, \mu, H)$, where $\Sspace$ is a state space of size $S$, $\Aspace$ is an action space of size $A$, $\tran : \Sspace \times \Aspace \to \Delta (\Sspace)$ is a Markovian transition model such that $\tran (s' | s, a)$ denotes the conditional probability of the next state $s'$ given the state $s$ and action $a$,
$\rew : \Sspace \times \Aspace \to \Delta ([0, 1])$ is a reward function such that the reward collected performing action $a$ in state $s$ is distributed as $r(s, a)$ with mean $R(s, a) = \EV [r(s, a)]$,
$\mu \in \Delta (\Sspace)$ is the initial state distribution, $H < \infty$ is the episode horizon.

An agent interacts with the MDP as follows. First, the initial state is drawn $s_1 \sim \mu$. For each step $h < H$, the agent selects an action $a_h \in \Aspace$. Then, they collect a reward $r_h \sim r (s_h, a_h)$ while the state transitions to $s_{h + 1} \sim p(\cdot|s_h, a_h)$. The episode ends when $s_H$ is reached.

The strategy from which the agent selects an action at each step is defined through a non-stationary, stochastic \emph{policy} $\pi = \{\pi_h\}_{h \in [H]} \in \Pi$, where each $\pi_h: \Sspace \to \Delta (\Aspace)$ is a function such that $\pi_h (a | s)$ denotes the conditional probability of selecting action $a$ in state $s$ at step $h$, and $\Pi$ is the policy space.
A policy $\pi \in \Pi$ can be evaluated through its value function $V^\pi_h : \Sspace \to [0, H]$, which is the expected sum of rewards collected under $\pi$ starting from state $s$ at step $h$, \ie
\begin{equation*}
    V^\pi_h (s) := \EV_{\pi} \left[ \sum_{h' = h}^H R(s_{h'}, a_{h'}) \Big| s_h = s \right], \qquad \forall s \in \Sspace, \ h \in [H].
\end{equation*}
We further define the value function of $\pi$ in the MDP $\mdp$ under $\mu$ as $V_\mdp (\pi) := \EV_{s \sim \mu} [V_1^\pi (s)]$.

\subsection{Causal structure induces factorization}
\label{sec:formulation_fmdp}

In the previous section, we formulated the MDP in a tabular representation, where each state (action) is identified by a unique symbol $s \in \Sspace$ ($a \in \Aspace$). However, in relevant real-world applications, the states and actions may be represented through a finite number of features, say $d_S$ and $d_A$ features respectively. The DTR problem is an example, where state features can be, e.g., blood pressure and glucose level, action features can be indicators on whether a particular medication is administered.

Let those state and action features be modeled by random variables in the interaction between an agent and the MDP, we can consider additional structure in the process by considering the causal graph of its variables, such that the value of a variable only depends on the values of its causal parents. Looking back to DTR, we might know that the value of the blood pressure at step $h + 1$ only depends on its value at step $h$ and whether a particular medication has been administered.
%

Formally, we can show that combining an MDP $\mdp = (\Sspace, \Aspace, p, r, \mu, H)$ with a causal graph over its variables, which we denote as $\G_\mdp = (\leftnodes, \rightnodes, z)$, gives a factored MDP~\citep{boutilier2000stochastic}
\begin{equation*}
    \fmdp = (\{ \leftnodes_j \}_{j=1}^{d_X}, \{ \rightnodes_j, z_j, p_j, r_j \}_{j=1}^{d_Y}, \mu, H, Z, N ),
\end{equation*}
where $\leftnodes = \Sspace \times \Aspace = \leftnodes_1 \times \ldots \times \leftnodes_{d_X}$ is a factored state-action space with $d_X = d_S + d_A$ discrete variables, $\rightnodes = \Sspace = \rightnodes_1 \times \ldots \times \rightnodes_{d_Y}$ is a factored state space with $d_Y = d_S$ variables, and $z_j$ are the causal parents of each state variable, which are obtained from the edges $z$ of $\G_\mdp$. Then, $p$ is a \emph{factored} transition model specified as
$
    p (y | x) = \prod_{j = 1}^{d_Y} p_j (y[j] \ | \ x[z_j]), \forall y \in \rightnodes, x \in \leftnodes,
$
and $r$ is a \emph{factored} reward function
$
    r (x) = \sum_{j = 1}^{d_Y} r(x[z_j]), \text{ with mean } R(x) = \sum_{j = 1}^{d_Y} R(x[z_j]), \forall x \in \leftnodes.
$
Finally, $\mu \in \Delta (\rightnodes)$ and $H$ are the initial state distribution and episode horizon as specified in $\mdp$, $Z$ is the degree of sparseness of $\G_\mdp$, $N$ is a constant such that all the variables are supported in $[N]$.

The interaction between an agent and the FMDP can be described exactly as we did in Section~\ref{sec:formulation_mdp} for a tabular MDP, and the policies with their corresponding value functions are analogously defined. With the latter formalization of the FMDP induced by a causal graph, we now have all the components to introduce our learning problem in the next section.

\begin{figure}[t!]
    \centering
    \includegraphics[width=0.99\textwidth]{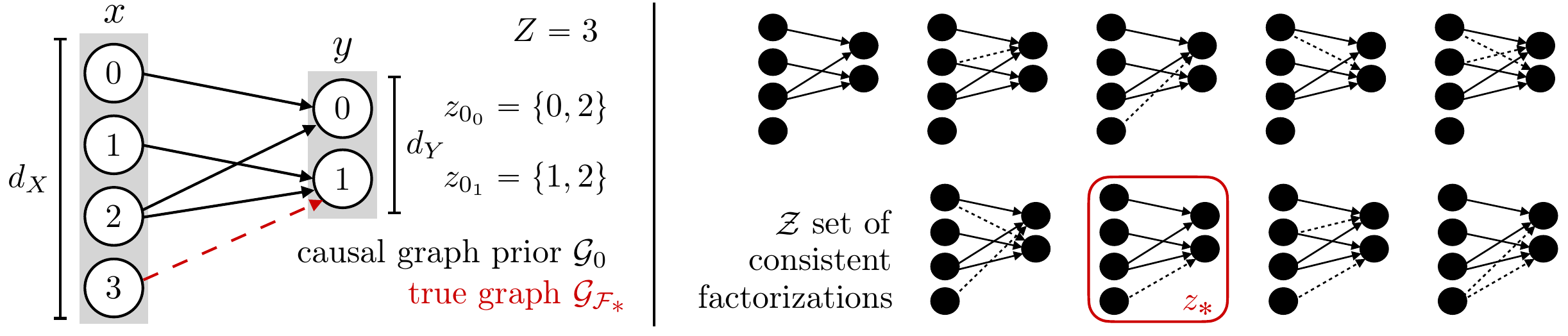}
    \vspace{-0.2cm}
    \caption{\textbf{(Left)} Illustrative causal graph prior $\G_0$ with $d_X = 4, d_Y=2$ features, degree of sparseness $Z = 3$. The hidden true graph $\G_{\fmdp_*}$ includes all the edges in $\G_0$ plus the red-dashed edge $(3,1)$. \textbf{(Right)} Visualization of $\mathcal{Z}$, the set of factorizations consistent with $\G_0$, which is the support of the hyper-prior $P_0$. The factorization $z_*$ of the true FMDP $\fmdp_*$ is highlighted in red.}
    \label{fig:causal_graph_prior}
    \vspace{-0.4cm}
\end{figure}

\subsection{Reinforcement learning with partial causal graph priors}
\label{sec:formulation_problem}

In the previous section, we show how the prior knowledge of a causal graph over the MDP variables can be exploited to obtain an FMDP representation of the problem, which is well-known to allow for more efficient reinforcement learning thanks to the factorization of the transition model and reward function~\citep{osband2014fmdp, xu2020reinforcement, tian2020towards, chen2020efficient, talebi2020improved, rosenberg2021oracle}.
However, in several applications is unreasonable to assume prior knowledge of the full causal graph, and causal identification is costly in general~\citep{gillispie2001enumerating, shah2020hardness}.
Nonetheless, \emph{some} prior knowledge of the causal graph, \ie a portion of the edges, may be easily available. For instance, in a DTR problem some edges of the causal graph on patient's variables are commonly known, whereas several others are elusive.

In this paper, we study the reinforcement learning problem when a partial causal graph prior $\G_0 \subseteq \G_\mdp$ on the MDP $\mdp$ is available.\footnote{For two bigraphs $\G_\star = (\leftnodes, \rightnodes, z_\star)$ and $\G_\bullet = (\leftnodes, \rightnodes, z_\bullet)$, we let $\G_\star \subseteq \G_\bullet$ if $z_\star \subseteq z_\bullet$.}
We formulate the learning problem in a Bayesian sense, in which the instance $\fmdp_*$ is sampled from a prior distribution $P_{\G_0}$ \emph{consistent} with the causal graph prior $\G_0$.\footnote{We will specify in the next Section~\ref{sec:algorithm} how the prior $P_{\G_0}$ can be constructed.}
In Figure~\ref{fig:causal_graph_prior} (left), we illustrate both the causal graph prior $\G_0$ and the (hidden) true graph $\G_{\fmdp_*}$ of the true instance $\fmdp_*$.
Analogously to previous works on Bayesian RL formulations, \eg \citep{osband2013more}, we evaluate the performance of a learning algorithm in terms of its induced Bayesian regret.
\begin{definition}[Bayesian Regret]
\label{def:bayesian_regret}
    Let $\mathfrak{A}$ a learning algorithm and let $P_{\G_0}$ a prior distribution on FMDPs consistent with the partial causal graph prior $\G_0$. The $K$-episodes Bayesian regret of $\mathfrak{A}$ is 
    \begin{equation*}
        \BR (K) := \EV_{\fmdp_* \sim P_{\G_0}} \left[ \sum_{k = 1}^K V_* (\pi_*) - V_* (\pi_k) \right],
    \end{equation*}
    where $V_* (\pi) = V_{\fmdp_*} (\pi)$ is the value of the policy $\pi$ in $\fmdp_*$ under $\mu$, $\pi_* \in \argmax_{\pi \in \Pi} V_* (\pi)$ is the optimal policy in $\fmdp_*$, and $\pi_k$ is the policy played by algorithm $\mathfrak{A}$ at step $k \in [K]$.
\end{definition}

The Bayesian regret allows to evaluate a learning algorithm on average over multiple instances. This is particularly suitable in some domains, such as DTR, in which it is crucial to achieve a good performance of the treatment policy on different patients. 
In the next section, we introduce an algorithm that achieves a Bayesian regret rate that is sublinear in the number of episodes $K$.

\section{Causal PSRL}
\label{sec:algorithm}
To face the learning problem described in the previous section, we cannot na\"ively apply the PSRL algorithm for FMDPs~\citep{osband2014fmdp}, since we cannot access the factorization $z_*$ of the true instance $\fmdp_*$, but only a causal graph prior $\G_0 = (\leftnodes, \rightnodes, z_0)$ such that $z_0 \subseteq z_*$. Moreover, $z_*$ is always \emph{latent} in the interaction process, in which we can only observe state-action-reward realizations from $\fmdp_*$.
The latter can be consistent with several factorizations of the transition dynamics of $\fmdp_*$, which means we can neither extract $z_*$ directly from data. This is the common setting of \emph{hierarchical Bayesian methods}~\citep{hong2020latent, hong2022hierarchical, hong2022thompson, kveton2021meta}, where a latent state is sampled from a latent hypothesis space on top of the hierarchy, which then conditions the sampling of the observed state down the hierarchy. In our setting, we can see the latent hypothesis space as the space of all the possible factorizations that are consistent with $\G_0$, whereas the observed states are the model parameters of the FMDP, from which we observe realizations. The algorithm that we propose, \AlgNameDef, builds on this intuition to implement a principled hierarchical posterior sampling procedure to minimize the Bayesian regret exploiting the causal graph prior. 

\vspace{-5pt}
\begin{algorithm}[h]
    \caption{\AlgNameLong (\AlgNameShort)}
    \label{alg:C-PSRL}
    \begin{algorithmic}[1]
        \STATE \textbf{input}: causal graph prior $\G_0 \subseteq \G_{\fmdp_*}$, degree of sparseness $\sparseness$
        \STATE Compute the set of consistent factorizations 
        \vspace{-5pt}
        \begin{equation*}
            \Z = \Z_1 \times \ldots \times \Z_{d_Y} = \Big\{ z = \{ z_j \}_{j \in [d_Y]} \quad \Big| \quad |z_j| < Z \text{ and } z_{0,j} \subseteq z_j \quad \forall j \in [d_Y] \Big\}
        \end{equation*}
        \vspace{-11pt}
        \STATE Build the hyper-prior $ P_0 $ and the prior $P_0 (\cdot | z) $ for each $z \in \Z$
        \FOR{episode $k = 0, 1, \ldots, \numep-1$}
            \STATE Sample $z \sim P_k$ and $p \sim P_k (\cdot | z)$ to build the FMDP $\fmdp_k$
            \STATE Compute the policy $\pi_k \gets \argmax_{\pi \in \Pi} V_{\fmdp_k} (\pi)$
            collect an episode with $\pi_k$ in $\fmdp_*$
            \STATE Compute the posteriors $P_{k + 1}$ and $P_{k + 1} (\cdot | z)$ with the collected data
        \ENDFOR
    \end{algorithmic}
\end{algorithm}
\vspace{-5pt}

First, C-PSRL computes the set $\Z$, illustrated in Figure~\ref{fig:causal_graph_prior} (right), of the factorizations consistent with $\G_0$, \ie which are both $Z$-sparse and include all of the edges in $z_0$ (line 2). Then, it specifies a parametric distribution $P_0$, called \emph{hyper-prior}, over the latent hypothesis space $\Z$, and, for each $z \in \Z$, a further parametric distribution $P_0 (\cdot | z)$, which is a \emph{prior} on the model parameters, \ie transition probabilities, conditioned on the latent state $z$ (line 3). The former represents the agent's belief over the factorization of the true instance $\fmdp_*$, the latter on the factored transition model $p_*$.\footnote{A description of parametric distributions $P_0$ and $P_0 (\cdot | z)$ and their posterior updates is in Appendix~\ref{sec:posteriors}.}

Having translated the causal graph prior $\G_0$ into proper parametric prior distributions, C-PSRL executes a hierarchical posterior sampling procedure (lines 4-8). For each episode $k$, the algorithm sample a factorization $z$ from the current hyper-prior $P_k$, and a transition model $p$ from the prior $P_k (\cdot | z)$, such that $p$ is factored according to $z$ (line 5). With these two, it builds the FMDP $\fmdp_k$ (line 5), for which it computes the optimal policy $\pi_k$ solving the corresponding planning problem, which is deployed on the true instance $\fmdp_*$ for one episode (line 6). Finally, the evidence collected in $\fmdp_*$ serves to compute the closed-form posterior updates of the prior and hyper-prior (line 7). 

As we shall see, Algorithm~\ref{alg:C-PSRL} has compelling statistical properties, a regret sublinear in $K$ (Section~\ref{sec:regret_analysis}) with a notion of causal discovery (Section~\ref{sec:causal_identification}), and promising empirical performance (Section~\ref{sec:experiments}). 

\textbf{Recipe.}~~
Three key ingredients concur to make the algorithm successful. 
First, C-PSRL links RL of an FMDP $\fmdp_*$ with unknown factorization to a hierarchical Bayesian learning, in which the factorization acts as a latent state on top of the hierarchy, and the transition probabilities are the observed state down the hierarchy.
Secondly, C-PSRL exploits the causal graph prior $\G_0$ to reduce the size of the latent hypothesis space $\Z$, which is super-exponential in $d_X, d_Y$ in general~\citep{robinson1973counting}.
Finally, C-PSRL harnesses the specific causal structure of the problem to get a factorization $z$ (line 5) through independent sampling of the parents $z_j \in \Z_j$ for each $Y_j$, which significantly reduces the number of hyper-prior parameters. Crucially, this can be done as we do not admit ``vertical'' edges in $\rightnodes$ and edges from $\rightnodes$ to $\leftnodes$, such that parents' assignment cannot lead to a cycle.

\textbf{Degree of sparseness.}~~
C-PSRL takes as input (line 1) the degree of sparseness $\sparseness$ of the true FMDP $\fmdp_*$, which might be unknown in practice. In that case, $\sparseness$ can be seen as an hyper-parameter of the algorithm, which can be either implied through domain expertise or tuned independently.

\textbf{Planning in FMDPs.}~~
C-PSRL requires exact planning in a FMDP (line 6), which is intractable in general~\citep{mundhenk2000complexity, lusena2001nonapproximability}. While we do not address computational issues in this paper, we note that efficient approximation schemes have been developed~\citep{guestrin2011efficient}. Moreover, under linear realizability assumptions for the transition model or value functions, exact planning methods exist~\citep{yang2019sample, jin2020provably, deng2022polynomial}.

\section{Regret analysis of C-PSRL}
\label{sec:regret_analysis}
In this section, we study the Bayesian regret induced by C-PSRL with a $Z$-sparse causal graph prior $\G_0 = (\leftnodes, \rightnodes, z_0)$. First, we define the \emph{degree of prior knowledge} 
$
    \eta \leq \min_{j \in [d_Y]} |z_{0,j}|,
$ 
which is a lower bound on the number of causal parents revealed by the prior $\G_0$ for each state variable $Y_j$. We then provide an upper bound on the Bayesian regret of C-PSRL, which we discuss in Section~\ref{sec:regret_discussion}.
\begin{restatable}[]{theorem}{bayesianRegret}
\label{thr:bayesian_regret_theorem}
Let $\G_0$ be a causal graph prior with degree of sparseness $Z$ and degree of prior knowledge $\eta$. The $K$-episodes Bayesian regret incurred by \emph{C-PSRL} is
    \begin{equation*}
        \mathcal{BR}(\numep) = \tilde{O}\left(  \left( \horizon^{5 / 2} \numfeatures^{1+ \sparseness / 2} d_Y + \sqrt{\horizon 2^{d_X-\eta}} \right) \sqrt{\numep} \right).
    \end{equation*}
\end{restatable}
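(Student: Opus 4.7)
The plan is to reduce the analysis to two known results, namely the Bayesian regret bound for PSRL on FMDPs with a known factorization \citep{osband2014fmdp} and the latent-state identification cost in hierarchical posterior sampling \citep{hong2022thompson}, by cleanly splitting the regret according to whether C-PSRL sampled the true factorization $z_*$ at episode $k$. Throughout I would condition on the history $\mathcal{H}_k$ and use the defining property of posterior sampling: conditioned on $\mathcal{H}_k$, the pair $(z_k, p_k)$ drawn from $(P_k, P_k(\cdot\mid z_k))$ is distributed as $(z_*, p_*)$ is distributed under $P_{\mathcal{G}_0}\mid \mathcal{H}_k$.

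\textbf{Step 1: Posterior sampling decomposition.} By the standard PSRL identity, for any $\mathcal{H}_k$-measurable function $U$ of the FMDP, $\EV[U(\fmdp_*)\mid\history]=\EV[U(\fmdp_k)\mid\history]$. Applying this with $U(\fmdp)=V_{\fmdp}(\pi_{\fmdp})$ where $\pi_{\fmdp}$ is optimal in $\fmdp$, and using $\pi_k=\pi_{\fmdp_k}$, I can rewrite
\begin{equation*}
\BR(\numep)=\sum_{k=1}^{\numep}\EV\bigl[V_{\fmdp_k}(\pi_k)-V_*(\pi_k)\bigr].
\end{equation*}

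\textbf{Step 2: Split by whether the sampled factorization is correct.} Let $E_k=\{z_k=z_*\}$. Then
\begin{equation*}
\BR(\numep)=\underbrace{\sum_{k=1}^{\numep}\EV\!\left[(V_{\fmdp_k}(\pi_k)-V_*(\pi_k))\mathds{1}_{E_k}\right]}_{\BR_{\text{low}}}+\underbrace{\sum_{k=1}^{\numep}\EV\!\left[(V_{\fmdp_k}(\pi_k)-V_*(\pi_k))\mathds{1}_{E_k^c}\right]}_{\BR_{\text{high}}}.
\end{equation*}

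\textbf{Step 3: Bound $\BR_{\text{low}}$ via FMDP PSRL.} On $E_k$ the sampled FMDP shares the correct factorization with $\fmdp_*$, and the only uncertainty is in the factor transition parameters $p_j(\cdot\mid x[z_j])$, each living in a simplex of size $\numfeatures$ indexed by $\numfeatures^{|z_j|}\le \numfeatures^{\sparseness}$ parent configurations. I would invoke the Osband--Van Roy factored-MDP analysis~\citep{osband2014fmdp} applied to the posterior of $p$ conditioned on $z_*$, which is well-defined under posterior sampling because, restricted to $E_k$, $p_k\sim P_k(\cdot\mid z_*)$ and this is the true posterior of $p_*$ given $\history\cap E_k$. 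This yields
\begin{equation*}
\BR_{\text{low}}=\tilde{O}\bigl(\horizon^{5/2}\numfeatures^{1+\sparseness/2}d_Y\sqrt{\numep}\bigr),
\end{equation*}
matching the first term of the claimed rate.

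\textbf{Step 4: Bound $\BR_{\text{high}}$ via hierarchical posterior sampling.} Bounding each summand trivially by $\horizon$ and invoking the hierarchical Thompson-sampling analysis of \citet{hong2022thompson}, the cumulative cost of misidentifying the latent factorization scales like $\sqrt{\horizon|\fact|\numep}$ up to logs. The key quantitative input is that the causal graph prior $\G_0$ with degree of prior knowledge $\eta$ shrinks the latent hypothesis space: since $z_{0,j}\subseteq z_j$ with $|z_{0,j}|\ge\eta$, for each $j$ the number of admissible parent sets is at most $2^{d_X-\eta}$, and so $|\fact|\le 2^{d_X-\eta}$ after factoring through the independent-sampling recipe used by C-PSRL (this is where the ``recipe'' paragraph pays off, preventing a naive $d_Y$-fold blow-up of the latent space). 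Substituting gives
\begin{equation*}
\BR_{\text{high}}=\tilde{O}\bigl(\sqrt{\horizon\,2^{d_X-\eta}\,\numep}\bigr).
\end{equation*}

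\textbf{Step 5: Combine.} Adding $\BR_{\text{low}}$ and $\BR_{\text{high}}$ yields the stated bound.

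\textbf{Main obstacle.} The tricky step is Step 4: the hierarchical analysis of \citet{hong2022thompson} must be adapted so that the ``latent state'' is the joint factorization $z\in\fact$ and the ``observation model'' is the factored transition kernel; in particular, care is needed to show that conditional on $E_k^c$ one still has the Bayes-consistency required to apply their argument, and that independent per-coordinate sampling of $z_j$ by C-PSRL genuinely yields the effective latent-space size $2^{d_X-\eta}$ rather than $2^{d_Y(d_X-\eta)}$. Controlling the interaction between the upper- and lower-level posteriors (so that a wrong $z_k$ does not spuriously corrupt the transition posterior used in Step 3) is the delicate part, and I would handle it by restricting all posterior updates in the lower-level analysis to the subsequence of episodes on which $z_k$ matches $z_*$, which is legitimate under posterior sampling.
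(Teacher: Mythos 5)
Your decomposition in Step 2 --- splitting on the event $E_k=\{z_k=z_*\}$ --- is genuinely different from the paper's, and it contains a gap that I do not think can be repaired along the lines you sketch. The problem is Step 4: bounding each summand of $\BR_{\text{high}}$ by $\horizon$ reduces that term to $\horizon\sum_{k}\PR(z_k\neq z_*)$, and this quantity need not be $\tilde O(\sqrt{\horizon\,2^{d_X-\eta}\numep})$ --- it can be $\Omega(\numep)$. Conditioned on $\history$, $z_k$ and $z_*$ are independent draws from the hyper-posterior $P_k$, so $\PR_k(z_k=z_*)=\sum_z P_k(z)^2$, and the hyper-posterior generally never concentrates on $z_*$: any super-graph of $z_*$ (with suitably marginalized factor parameters) explains the observed transitions equally well, which is exactly why the paper's Corollary~\ref{cor:causal-identification} only claims recovery of a \emph{super}-graph. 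The analysis of \citet{hong2022thompson} does not show that the latent state is identified; it controls the latent-level regret without identification, so you cannot import a bound on $\sum_k\PR(E_k^c)$ from it. The paper avoids this trap entirely: it adds and subtracts the posterior-mean value $\overline{V}_k(\pi_*,Z_*)=\EV_{\fmdp\sim P_k(\cdot\mid Z_*)}[V_\fmdp(\pi_*)]$, so that the second term $\EV_k[\overline{V}_k(\pi_k,Z_k)-V_*(\pi_k)]$ is controlled via confidence sets $C_k=\{z: G_k(z)\le\sqrt{\horizon N_k(z)\log\numep}\}$ built from a return-underestimation statistic $G_k(z)$; a sampled $z_k\in C_k$ contributes little regret even when $z_k\neq z_*$, because what matters is whether its conditional posterior predicts returns well, not whether it equals $z_*$.

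Step 3 also has a subtler issue: conditioning on $E_k$ reweights the law of the common factorization to be proportional to $P_k(z)^2$ rather than $P_k(z)$, and the event $E_k$ is not $\history$-measurable (it depends on the unknown $z_*$), so the ``restrict posterior updates to the subsequence where $z_k=z_*$'' device is not a legitimate filtration argument; the exchangeability identities underlying the FMDP-PSRL analysis of \citet{osband2014fmdp} do not survive this conditioning cleanly. Your Steps 1, 3 (the target rate $\tilde O(\horizon^{5/2}\numfeatures^{1+\sparseness/2}d_Y\sqrt{\numep})$ for the model-learning part), and the combinatorial refinement $\max_{j}|\Z_j|=\sum_{i=0}^{\sparseness-\eta}\binom{d_X-\eta}{i}\le 2^{d_X-\eta}$ via per-coordinate sampling all match the paper, but the hinge of the argument --- how the latent-level term is defined and bounded --- needs to be replaced by the posterior-mean/confidence-set route rather than an identification event.
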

%
While we defer the proof of the result to Appendix~\ref{sec:proofs}, we report a sketch of its main steps below.

\textbf{Step 1.}~~
The first step of our proof bridges the previous analyses of a hierarchical version of PSRL, which is reported in~\citep{hong2022thompson}, with the one of PSRL for factored MDPs~\citep{osband2014fmdp}.
In short, we can decompose the Bayesian regret (see Definition~\ref{def:bayesian_regret}) as
\begin{equation*}
    \label{eq:bayesian_regret_decomposition}
    \BR(\numep) = \EV \left[\sum_{k=1}^\numep \EV_k \big[V_*(\pi_*) - \overline{V}_k(\pi_*, Z_*)\big]\right] + \EV \left[\sum_{k=1}^\numep \EV_k \big[\overline{V}_k(\pi_k, Z_k) - V_*(\pi_k)\big]\right]
\end{equation*}
where $\EV_k[\cdot]$ is the conditional expectation given the evidence collected until episode $k$, and $\overline{V}_k(\pi, z) = \EV_{\fmdp \sim P_k(\cdot \mid z)}\left[V_\fmdp(\pi)\right]$ is the value function of $\pi$ on average over the posterior $P_k (\cdot | z)$. Informally, the first term captures the regret due to the concentration of the \emph{posterior} $P_k (\cdot | z_*)$ around the true transition model $p_*$ having fixed the true factorization $z_*$. Instead, the second term captures the regret due to the concentration of the \emph{hyper-posterior} $P_k$ around the true factorization $z_*$. Through a non-trivial adaptation of the analysis in~\citep{hong2022thompson} to the FMDP setting, we can bound each term separately to obtain
$
\tilde{O} (( \horizon^{5/2} \numfeatures^{1+\sparseness/2} d_Y + \sqrt{\horizon |\Z|} ) \sqrt{\numep} ).
$

\textbf{Step 2.}~~
The upper bound of the previous step is close to the final result up to a factor $\sqrt{|\Z|}$ related the size of the latent hypothesis space.
Since C-PSRL performs local sampling from the product space $\Z = \Z_1 \times \ldots \times \Z_{d_Y}$, by combining independent samples $z_j \in \Z_j$ for each variable $Y_j$ as we briefly explained in Section~\ref{sec:algorithm}, we can refine the dependence in $|\Z|$ to $\max_{j \in [d_Y]} |\Z_j| \leq |\Z|$.

\textbf{Step 3.}~~
Finally, to obtain the final rate reported in Theorem~\ref{thr:bayesian_regret_theorem}, we have to capture the dependency in the degree of prior knowledge $\eta$ in the Bayesian regret by upper bounding $\max_{j \in [d_Y]} |\Z_j|$ as
\begin{equation*}
    \max_{j \in [d_Y]} |\Z_j| = \sum\nolimits_{i=0}^{\sparseness-\eta} \binom{d_X - \eta}{i} \leq 2^{d_X-\eta}.
\end{equation*}

\subsection{Discussion of the Bayesian regret}
\label{sec:regret_discussion}
The regret bound in Theorem~\ref{thr:bayesian_regret_theorem} contains two terms, which informally capture the regret to learn the transition model having the true factorization (left), and to learn the true factorization (right).

The first term is typical in previous analyses of vanilla posterior sampling. Especially, the best known rate for the MDP setting is $\tilde{O} (\horizon \sqrt{SA\numep})$~\citep{osband2017posterior}. In a FMDP setting with known factorization, the direct dependencies with the size $S, A$ of the state and action spaces can be refined to obtain~$\tilde{O} (\horizon d_Y^{3 / 2} \numfeatures^{\sparseness / 2} \sqrt{\numep})$~\citep{osband2014fmdp}.
Our rate includes additional factors of $H$ and $N$, but a better dependency on the number of state features $d_Y$.

The second term of the regret rate is instead unique to hierarchical Bayesian settings, which include an additional source of randomization in the sampling of the latent state from the hyper-prior. In Theorem~\ref{thr:bayesian_regret_theorem}, we are able to express this term in the degree of prior knowledge $\eta$, resulting in a rate $\tilde O (\sqrt{K / 2^\eta})$. The latter naturally demonstrates that a richer causal graph prior $\G_0$ will benefit the efficiency of PSRL, bringing the regret rate closer to the one for an FMDP with known factorization.

We believe that the rate in Theorem~\ref{thr:bayesian_regret_theorem} is shedding light on how prior causal knowledge, here expressed through a partial causal graph, impacts on the efficiency of posterior sampling for RL. 

\section{C-PSRL embeds a notion of causal discovery}
\label{sec:causal_identification}
In this section, we provide an ancillary result that links Bayesian regret minimization with C-PSRL to a notion of causal discovery, which we call \emph{weak causal discovery}. Especially, we show that we can extract a $\sparseness$-sparse super-graph of the causal graph $\G_{\fmdp_{*}}$ of the true instance $\fmdp_*$ as a byproduct.

A run of C-PSRL produces a sequence $\{ \pi_k \}_{k = 0}^{\numep - 1}$ of optimal policies for the FMDPs $\{ \fmdp_{k} \}_{k = 0}^{\numep - 1}$ drawn from the posteriors. Every FMDP $\fmdp_k$ is linked to a corresponding graph (or factorization) $\G_{\fmdp_k} = (\leftnodes, \rightnodes, z_k)$, where $z_k \sim P_k$ is sampled from the hyper-posterior. Note that the algorithm does not enforce any causal meaning to the edges $z_k$ of $\G_{\fmdp_k}$. Nonetheless, we aim to show that we can extract a $\sparseness$-sparse super-graph of $\G_{\fmdp_*}$ from the sequence $\{ \fmdp_{k} \}_{k = 0}^{\numep - 1}$ with high probability.

First, we need to assume that any misspecification in $\G_{\fmdp_k}$ negatively affects the value function of $\pi_k$. Thus, we extend the traditional notion of causal minimality~\citep{spirtes2000causation} to value functions.
\begin{restatable}[$\epsilon$-Value Minimality]{definition}{DefEpsValueMinimality}
\label{ass:minimality-value-function}
An FMDP $\fmdp$ fulfills $\epsilon$-value minimality, if for any FMDP $\fmdp'$ encoding a proper subgraph of $\G_{\fmdp}$, \ie $\G_{\fmdp'} \subset \G_{\fmdp}$, it holds that $V_{\fmdp}^* > V_{\fmdp'}^* + \epsilon$, where $V_{\fmdp}^*$, $V_{\fmdp'}^*$ are the value functions of the optimal policies in $\fmdp$, $\fmdp'$ respectively.
\end{restatable}
Then, as a corollary of Theorem~\ref{thr:bayesian_regret_theorem}, we can prove the following result.
\begin{restatable}[Weak Causal Discovery]{corollary}{CorCausalIdentification}
    \label{cor:causal-identification}
    Let $\fmdp_*$ be an FMDP in which the transition model $p_*$ fulfills the causal minimality assumption with respect to $\G_{\fmdp_*}$, and let $\fmdp_*$ fulfill $\epsilon$-value minimality. Then, $\G_{\fmdp_*} \subseteq \G_{\fmdp_K}$ holds with high probability, where $\G_{\fmdp_K}$ is a $Z$-sparse graph randomly selected within the sequence $\{ \G_{\fmdp_k} \}_{k = 0}^{K - 1}$ produced by C-PSRL over $\numep = \tilde{O} (\horizon^5 d_Y^2 2^{d_X - \eta} / \epsilon^2)$ episodes.
\end{restatable}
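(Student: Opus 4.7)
The plan is to pass the cumulative Bayesian regret of Theorem~\ref{thr:bayesian_regret_theorem} through an averaging/Markov argument to obtain per-episode control, and then invoke $\epsilon$-value minimality contrapositively to convert small per-episode regret into the graph-containment claim $\G_{\fmdp_*}\subseteq \G_{\fmdp_K}$.

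First, I would divide the rate of Theorem~\ref{thr:bayesian_regret_theorem} by $\numep$ to obtain an average per-episode Bayesian regret of order $\tilde O\big((\horizon^{5/2}\numfeatures^{1+\sparseness/2}d_Y + \sqrt{\horizon\cdot 2^{d_X-\eta}})/\sqrt{\numep}\big)$. Setting $\numep = \tilde O(\horizon^5 d_Y^2 2^{d_X-\eta}/\epsilon^2)$ and absorbing subdominant factors into $\tilde O$ forces the average below a small constant multiple of $\epsilon$. Interpreting the selection of $\G_{\fmdp_K}$ from $\{\G_{\fmdp_k}\}_{k=0}^{\numep-1}$ as a uniform random draw, one obtains $\EV[V_*(\pi_*) - V_*(\pi_K)] \le c\,\epsilon$ for some $c<1$, and Markov's inequality (optionally tightened by a boosting-via-repetitions argument over a few independent random indices) yields $V_*(\pi_*) - V_*(\pi_K) < \epsilon$ with high probability over both the algorithm's randomness and the index.

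The heart of the proof is the contrapositive bridge: showing that $\G_{\fmdp_*}\not\subseteq \G_{\fmdp_K}$ implies $V_*(\pi_*) - V_*(\pi_K) \ge \epsilon$. Under this failure of containment, the intersection $\G_{\fmdp_*}\cap \G_{\fmdp_K}$ is a proper subgraph of $\G_{\fmdp_*}$. Causal minimality of $p_*$ excludes $p_*$ from any factored class induced by a strict subgraph, while $\epsilon$-value minimality guarantees that for every such FMDP $\fmdp'$ it holds that $V_{\fmdp'}^* < V_{\fmdp_*}^* - \epsilon$. Combining this with a concentration argument on $P_k(\cdot\mid z)$, which sharpens onto the projection of $p_*$ over the sampled factorization, should allow one to conclude that the true-environment value of $\pi_K$ inherits the $\epsilon$-gap from that projected FMDP, giving the desired per-episode lower bound. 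Chaining this with the upper bound above by contraposition yields $\G_{\fmdp_*} \subseteq \G_{\fmdp_K}$ with high probability.

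The step I expect to be the main obstacle is the bridge itself. $\epsilon$-value minimality, as stated, compares intrinsic optimal values of two different FMDPs under their own dynamics, whereas the per-episode regret compares, on the same dynamics $p_*$, the optimal policy $\pi_*$ against a policy $\pi_K$ that was optimized for the sampled (possibly mis-specified) $\fmdp_K$. Making the transfer rigorous will require a careful projection/marginalization construction together with posterior concentration, so that an optimal policy produced from a ``wrong-graph'' sampled FMDP is provably $\epsilon$-suboptimal on $\fmdp_*$. Once this bridge is in place, the contrapositive immediately completes the proof of the corollary.
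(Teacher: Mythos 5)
Your proposal follows essentially the same route as the paper's proof: a regret-to-PAC conversion of Theorem~\ref{thr:bayesian_regret_theorem} for a uniformly selected index $K$, followed by the contrapositive use of $\epsilon$-value minimality to turn $V_*(\pi_*) - V_*(\pi_K) < \epsilon$ into $\G_{\fmdp_*} \subseteq \G_{\fmdp_K}$, and finally solving for $\numep$ in terms of $\epsilon$. The bridge step you single out as the main obstacle — relating the intrinsic optimal values of two FMDPs under their own dynamics to the suboptimality of $\pi_K$ evaluated on $p_*$ — is exactly the step the paper dispatches in a single sentence (``$\pi_K$ can be $\epsilon$-optimal in the true FMDP only if $\G_{\fmdp_*} \subseteq \G_{\fmdp_K}$''), without the projection and posterior-concentration construction you correctly anticipate would be needed to make it fully rigorous.
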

The latter result shows that C-PSRL discovers the causal relationships between the FMDP variables, but cannot easily prune the non-causal edges, making $\G_{\fmdp_K}$ a super-graph of $\G_{\fmdp_*}$. In Appendix~\ref{sec:apx_causal_identification}, we report a detailed derivation of the previous result. Interestingly, Corollary~\ref{cor:causal-identification} suggests a direct link between regret minimization in a FMDP with unknown factorization and a (weak) notion of causal discovery, which might be further explored in future works.

\section{Experiments}
\label{sec:experiments}
In this section, we provide experiments to both support the design of C-PSRL (Section~\ref{sec:algorithm}) and validate its regret rate (Section~\ref{sec:regret_analysis}). 
We consider two simple yet illustrative domains. The first, which we call \emph{Random FMDP}, benchmarks the performance of C-PSRL on randomly generated FMDP instances, a setting akin to the Bayesian learning problem (see Section~\ref{sec:formulation_problem}) that we considered in previous sections. The latter is a traditional \emph{Taxi} environment~\citep{dietterich2000hierarchical}, which is naturally factored and hints at a potential application.
In those domains, we compare C-PSRL against two natural baselines: PSRL for tabular MDPs~\citep{strens2000bayesian} and Factored PSRL (F-PSRL), which extends PSRL to factored MDP settings~\citep{osband2014fmdp}. Note that F-PSRL is equivalent to an instance of C-PSRL that receives the true causal graph prior as input, \ie has an oracle prior.

\textbf{Random FMDPs.}~~
An FMDP (relevant parameters are reported in the caption of Figure~\ref{fig:experiments}) is sampled uniformly from the prior specified through a random causal graph, which is $Z$-sparse with at least two edges for every state variable ($\eta = 2$). Then, the regret is minimized by running PSRL, F-PSRL, and C-PSRL ($\eta = 2$) for $500$ episodes. Figure~\ref{fig:random_regret} shows that C-PSRL achieves a regret that is significantly smaller than PSRL, thus outperforming the baseline with an uninformative prior, while being surprisingly close to F-PSRL, having the oracle prior. Indeed, C-PSRL resulted efficient in estimating the transition model of the sampled FMDP, as we can see from Figure~\ref{fig:random_error}, which reports the $\ell_1$ distance between the true model $p_*$ and the $p_k$ sampled by the algorithm at episode $k$.

\textbf{Taxi.}~~
For the Taxi domain, we use the common Gym implementation~\citep{brockman2016openai}. In this environment, a taxi driver needs to pick up a passenger at a specific location, and then it has to bring the passenger to their destination. The environment is represented as a grid, with some special cells identifying the passenger location and destination. As reported in~\cite{simao2019safe}, this domain is inherently factored since the state space is represented by four independent features: The position of the taxi (row and column), the passenger's location and whether they are on the taxi, and the destination. We perform the experiment on two grids with varying size ($3 \times 3$ and $5 \times 5$ respectively), for which we report the relevant parameters in Figure~\ref{fig:experiments}. Here we compare the proposed algorithm \AlgNameShort ($\eta = 2$) with PSRL, while F-PSRL is omitted as the knowledge of the oracle prior is not available. Both algorithms converge to a good policy eventually in the smaller grid (see the regret in Figure~\ref{fig:taxi_small}). Instead, when the size of the grid increases, PSRL is still suffering a linear regret after $400$ episodes, whereas \AlgNameShort succeeds in finding a good policy efficiently (see Figure~\ref{fig:taxi}). Notably, this domain resembles the problem of learning optimal routing in a taxi service, and our results show that exploiting common knowledge (such as that the location of the taxi and passenger's destination) in the form of a causal graph prior can be a game changer.

\begin{figure*}[t]
    \centering
	\begin{subfigure}[t]{0.25\textwidth}
		\centering
		\includegraphics[scale=0.89]{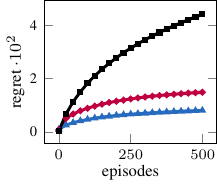}
        \vspace{-6pt}
		\caption{Random FMDP}
		\label{fig:random_regret}		
	\end{subfigure}
    \centering
	\begin{subfigure}[t]{0.25\textwidth}
		\centering
		\includegraphics[scale=0.89]{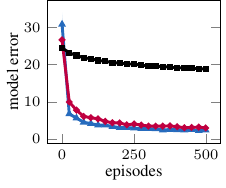}
        \vspace{-6pt}
		\caption{Random FMDP}
		\label{fig:random_error}
	\end{subfigure}
    \hfill
    \begin{subfigure}[t]{0.24\textwidth}
		\centering
        \includegraphics[scale=0.89]{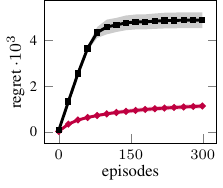}
        \vspace{-18pt}
		\caption{Taxi $3 \times 3$}
		\label{fig:taxi_small}
	\end{subfigure}
    \centering
	\begin{subfigure}[t]{0.24\textwidth}
		\centering
        \includegraphics[scale=0.89]{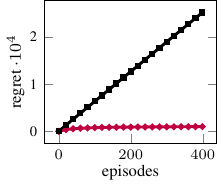}
        \vspace{-18pt}
		\caption{Taxi $5 \times 5$}
		\label{fig:taxi}
	\end{subfigure}

    \begin{subfigure}[t]{\textwidth}
    \centering
    \vspace{-3pt}
    \includegraphics{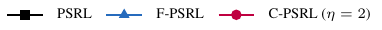}
    \end{subfigure}

    \vspace{-10pt}
	\caption{\textbf{(a,b)} Regret and model error as a function of the episodes in the Random FMDP domain with $d_X = 9, d_Y = 6, Z = 5, N = 2, H = 100$. \textbf{(c,d)} Regret as a function of the episodes in Taxi $3 \times 3$ with $d_X = 5, d_Y = 4, Z = 5, N = [3, 3, 2, 1, 6], H = 10$, Taxi $5 \times 5$ with $d_X = 5, d_Y = 4, Z = 5, N = [5, 5, 2, 1, 6], H = 15$. The plots report the mean and 95\% c.i. over 20 runs.}
    \vspace{-0.4cm}
	\label{fig:experiments}
\end{figure*}

\section{Related work}
\label{sec:related_works}
We revise here the most relevant related work in posterior sampling, factored MDPs, and causal RL.

\textbf{Posterior sampling.}~~
Thompson sampling~\citep{thompson1933} is a well-known Bayesian algorithm that has been extensively analyzed in both multi-armed bandit problems~\citep{kaufmann2012thompson, agrawal2012analysis} and RL~\citep{osband2013more}. \citet{osband2017posterior} provides a regret rate $\tilde O (\horizon \sqrt{SA\numep})$ for vanilla Thompson sampling in RL, which is called the PSRL algorithm. Recently, other works adapted Thompson sampling to hierarchical Bayesian problems~\citep{hong2020latent, hong2022hierarchical, hong2022thompson, kveton2021meta}. Mixture Thompson sampling~\citep{hong2022thompson}, which is similar to PSRL but samples the unknown MDP from a mixture prior, is arguably the closest to our setting. In this paper, we take inspiration from their algorithm to design C-PSRL and derive its analysis, even though, instead of their tabular setting, we tackle a fundamentally different problem on factored MDPs resulting from a casual graph prior, which induces unique challenges.

\textbf{Factored MDPs.}~~
Previous works considered RL in FMDPs \citep{boutilier2000stochastic} with either known~\citep{osband2014fmdp, xu2020reinforcement, talebi2020improved, tian2020towards, chen2020efficient} or unknown~\citep{strehl2007efficient, vigorito2009incremental, diuk2009adaptive, chakraborty2011structure, hallak2015off, guo2017sample, rosenberg2021oracle} factorization. The PSRL algorithm has been adapted to both finite-horizon~\citep{osband2014fmdp} and infinite-horizon~\citep{xu2020reinforcement} FMDPs. The former assumes knowledge of the factorization, close to our setting with an oracle prior, and provides Bayesian regret of order $\tilde{O} (\horizon d_Y^{3 / 2} \numfeatures^{\sparseness / 2} \sqrt{\numep})$. Previous works also studied RL in FMDPs in a frequentist sense, either with known~\citep{chen2020efficient} or unknown~\citep{rosenberg2021oracle} factorization. \citet{rosenberg2021oracle} employ an optimistic method that is orthogonal to ours, whereas they leave as an open problem capturing the effect of prior knowledge, for which we provide answers in a Bayesian setting.

\textbf{Causal RL.}~~
Various works addressed RL with a causal perspective~\citep[see][Chapter 7]{kaddour2022causal}. Causal principles are typically exploited to obtain compact representations of states and transitions~\citep{tomar2021model, gasse2021causal}, or to pursue generalization across tasks and environments~\citep{zhang2020invariant, huang2022adarl, feng2022factored, mutti2022provably}. Closer to our setting, \citet{lu2022efficient} aim to exploit prior causal knowledge to learn in both MDPs and FMDPs. Our work differs from theirs in two key aspects: We show how to exploit a partial causal graph prior instead of assuming knowledge of the full causal graph, and we consider a Bayesian formulation of the problem while they tackle a frequentist setting through optimism principles.
\citet{zhang2020designing} show an interesting application of causal RL for designing treatments in a DTR problem.

\textbf{Causal bandits.}~~
Another research line connecting causal models and sequential decision-making is the one on \emph{causal bandits}~\citep{lattimore2016causal,sen2017identifying,lee2018structural,lee2019structural,lu2020regret,lu2021causal,nair2021budgeted,xiong2022combinatorial,feng2023combinatorial}, in which the actions of the bandit problem correspond to interventions on variables of a causal graph. There, the causal model specifies a particular structure on the actions, modelling their dependence with the rewarding variable, instead of the transition dynamics as in our work. Moreover, they typically assume the causal model to be known, with the exception of~\citep{lu2021causal}, and they study the simple regret in a frequentist sense rather than the Bayesian regret given a partial causal prior.
%
%
%
%
%

\section{Conclusion}
\label{sec:conclusions}
In this paper, we presented how to exploit prior knowledge expressed through a partial causal graph to improve the statistical efficiency of reinforcement learning. Before reporting some concluding remarks, it is worth commenting on where such a causal graph prior might be originated from.

\textbf{Exploiting experts' knowledge.}~~
One natural application of our methodology is  to exploit domain-specific knowledge coming from experts. In several domains, \eg medical or scientific applications, expert practitioners have some knowledge over the causal relationships between the domain's variables. However, they might not have a full picture of the causal structure, especially when they face complex systems such as the human body or biological processes. Our methodology allows those practitioners to easily encode their partial knowledge into a graph prior, instead of having to deal with technically involved Bayesian statistics to specify parametric prior distributions, and then let C-PSRL figure out a competent decision policy with the given information.

\textbf{Exploiting causal discovery.}~~
Identifying the causal graph over domain's variables, which is usually referred as causal discovery, is a main focus of \emph{causality}~\citep[][Chapter 3]{pearl2009causality}. The literature provides plenty of methods to perform causal discovery from data~\citep[][Chapter 4]{peters2017elements}, including learning causal variables and their relationships in MDP settings~\citep{zhang2020invariant, mutti2022provably}. However, learning the full causal graph, even when it is represented with a bigraph as in MDP settings~\citep{mutti2022provably}, can be statistically costly or even prohibitive~\citep{gillispie2001enumerating, wadhwa2021sample}. Moreover, not all the causal edges are guaranteed to transfer across environments~\citep{mutti2022provably}, which would force to perform causal discovery anew for any slight variation of the domain (\eg changing the patient in a DTR setting). Our methodology allows to focus on learning the \emph{universal} causal relationships~\citep{mutti2022provably}, which transfer across environments, \eg different patients, and then specify the prior through a partial causal graph.

The latter paragraphs describe two scenarios in which our work enhance the applicability of PSRL, bridging the gap between how the prior might be specified in practical applications and what previous methods currently require, \ie a parametric prior distribution. To summarize our contributions, we first provided a Bayesian formulation of reinforcement learning with prior knowledge expressed through a partial causal graph. Then, we presented an algorithm, C-PSRL, tailored for the latter problem, and we analyzed its regret to obtain a rate that is sublinear in the number of episodes and shows a direct dependence with the degree of causal knowledge. Finally, we derived an ancillary result to show that C-PSRL embeds a notion of causal discovery, and we provided an empirical validation of the algorithm against relevant baselines. C-PSRL resulted nearly competitive with F-PSRL, which enjoys a richer prior, while clearly outperforming PSRL with an uninformative prior.

Future works may derive a tighter analysis of the Bayesian regret of C-PSRL, as well as a stronger causal discovery result that allows to recover a minimal causal graph instead of a super-graph. Another important aspect is to address computational issues inherent to planning in FMDPs to scale the implementation of C-PSRL to complex domains. Finally, interesting future directions include extending our framework to model-free PSRL~\citep{dann2021provably, tiapkin2023model}, in which the prior may specify causal knowledge of the reward or the value function directly, and to study how prior misspecification~\citep{simchowitz2021bayesian} affects the regret rate.
%

%

\bibliography{biblio}
\bibliographystyle{iclr2024_conference}

\newpage
\begin{appendix}

\renewcommand{\contentsname}{Appendix}
\tableofcontents
\newpage
\addtocontents{toc}{\protect\setcounter{tocdepth}{2}}
\section{List of symbols}
\label{sec:notation}
    \begin{longtable}{lll}
        \multicolumn{3}{c}{\underline{\textbf{Basic mathematical objects}}} \\ 
         $\mathcal{A}$ & $\triangleq$ & Set or space \\
         $A$ & $\triangleq$ & Constant or random variable \\
         $a$ & $\triangleq$ & Element of a set \\
         $\Delta (\mathcal{A})$ & $\triangleq$ & Probability simplex over $\mathcal{A}$ \\
         $f: \mathcal{A} \to \mathcal{B}$ & $\triangleq$ & Function from $\mathcal{A}$ to $\mathcal{B}$ \\
         $[A]$ & $\triangleq$ & Set of integers $[A] = \{1, \ldots, A \}$ \\
         $x[\mathcal{I}]$ & $\triangleq$ & Scope operator  $x[\mathcal{I}] := \bigotimes_{i \in \mathcal{I}} x_i$ for any set $\mathcal{I} \subseteq [d]$, $x \in \Reals^d$ \\
         \noalign{\vskip 2mm}
         \multicolumn{3}{c}{\underline{\textbf{Causal graph}}} \\
         $\G$ & $\triangleq$ & Directed acyclic bigraph $\G = (\leftnodes, \rightnodes, \scopes)$\\
         $\leftnodes$ & $\triangleq$ & Set of $d_X$ random variables $\{ X_j \}_{j = 1}^{d_X}$ taking values $x_j \in [\numfeatures]$ \\
         $\rightnodes$ & $\triangleq$ & Set of $d_Y$ random variables $\{ Y_j \}_{j = 1}^{d_Y}$ taking values $y_j \in [\numfeatures]$\\
         $\scopes$ & $\triangleq$ & Directed edges $\scopes \subseteq \leftnodes \times \rightnodes$\\
         $\scopes_j$ & $\triangleq$ & Parents of $Y_j$ such that $\scopes_j = \{ i \ | \ (X_i, Y_j) \in z \} $ \\
         $\sparseness$ & $\triangleq$ & Degree of sparseness such that $|\scopes_j| < \sparseness, \forall j \in [d_Y]$ \\
         $N$ & $\triangleq$ & Size of the support of random variables \\
         \noalign{\vskip 2mm}
         \multicolumn{3}{c}{\underline{\textbf{MDP}}} \\ 
         $\mdp$ & $\triangleq$ & Markov decision process $\mdp = (\Sspace, \Aspace, \tran, \rew, \isd, \horizon)$\\
         $\Sspace$ & $\triangleq$ & State space \\
         $\Aspace$ & $\triangleq$ & Action space \\
         $\tran$ & $\triangleq$ & Transition model $\tran : \Sspace \times \Aspace \to \Delta (\Sspace)$ \\
         $\rew$ & $\triangleq$ & Reward function $\rew : \Sspace \times \Aspace \to \Delta ([0, 1])$ \\
         $\isd$ & $\triangleq$ & Initial state distribution $\isd \in \Delta (\Sspace)$ \\
         $\horizon$ & $\triangleq$ & Episode horizon $\horizon < \infty$ \\
         $S$ & $\triangleq$ & Size of the state space $S = |\Sspace|$ \\
         $A$ & $\triangleq$ & Size of the action space $A = |\Aspace|$ \\
         $s$ & $\triangleq$ & State $s \in \Sspace$ \\
         $a$ & $\triangleq$ & Action $a \in \Aspace$ \\
         $R(s, a)$ & $\triangleq$ & Mean reward $\EV [r (s, a)]$\\
         \noalign{\vskip 2mm}
         \multicolumn{3}{c}{\underline{\textbf{Factored MDP}}} \\
         $\fmdp$ & $\triangleq$ & Factored Markov Decision Process\\
         & & \multicolumn{1}{c}{$\fmdp = (\{ \leftnodes_j \}_{j=1}^{d_X},  \{\rightnodes_j, z_j, p_j, r_j, \}_{j=1}^{d_Y}, \mu, H, Z, N )$ } \\
         $d_X$& $\triangleq$ & Number of state-action variables\\
         $d_Y$& $\triangleq$ & Number of state variables\\
         $\leftnodes$ & $\triangleq$ & Factored state-action space $\leftnodes = \leftnodes_1 \times \ldots \times \leftnodes_{d_X}$\\
         $\rightnodes$ & $\triangleq$ & Factored state space $\rightnodes = \rightnodes_1 \times \ldots \times \rightnodes_{d_Y}$\\
         $z$ & $\triangleq$ & Directed edges $z \subseteq \leftnodes \times \rightnodes$, \ie a \emph{factorization} \\
         $\scopes_j$ & $\triangleq$ & Parents of $Y_j$ such that $\scopes_j = \{ i \ | \ (X_i, Y_j) \in z \} $ \\
         $\tran$ & $\triangleq$ & Factored transition model $p (y | x) = \prod_{j = 1}^{d_Y} p_j (y[j] \ | \ x[z_j])$\\
         $\rew$ & $\triangleq$ & Factored reward function $r (x) = \sum_{j = 1}^{d_Y} r_j (x[z_j])$\\
         $\isd$ & $\triangleq$ & Initial state distribution $\isd \in \Delta (\rightnodes)$ \\
         $\horizon$ & $\triangleq$ & Episode horizon $\horizon < \infty$ \\
         $\sparseness$ & $\triangleq$ & Degree of sparseness such that $|\scopes_j| < \sparseness, \forall j \in [d_Y]$ \\
         $N$ & $\triangleq$ & Size of the support of state and action variables \\
         \noalign{\vskip 2mm}
         \multicolumn{3}{c}{\underline{\textbf{Learning problem}}} \\
         $\numep$ & $\triangleq$ & Number of episodes \\
         $k$ & $\triangleq$ & Episode index \\
         $h$ & $\triangleq$ & Step index \\
         $\fact$ & $\triangleq$ & Space of the consistent factorizations $\fact \subseteq \leftnodes \times \rightnodes$\\
         $\fact_j$ & $\triangleq$ & Space of the consistent parents of $Y_j$ such that $\fact = \fact_1 \times \ldots \fact_{d_Y}$\\
         $P_0$ & $\triangleq$ & Hyper-prior on the factorizations consistent with $\G_0$ (supported in $\fact$) \\
         $P_k$ & $\triangleq$ & Posterior of the hyper-prior $P_0$ at episode $k \in [K]$ \\
         $P_0 (\cdot | z)$ & $\triangleq$ & Prior on the FMDPs with factorization $z$ \\
         $P_k (\cdot | z)$ & $\triangleq$ & Posterior of the prior $P_0 (\cdot|z)$ at episode $k \in [K]$ \\
         $P_{\G_0}$ & $\triangleq$ & Prior on the FMDPs consistent with $\G_0$ such that \\
         & & \multicolumn{1}{c}{$P_{\G_0} (\fmdp) = P_0 (p_{\fmdp} | z_{\fmdp}) P_0 (z_{\fmdp})$ } \\
         $\BR (K)$ & $\triangleq$ & $K$-episodes Bayesian regret \\
         \noalign{\vskip 2mm}
         \multicolumn{3}{c}{\underline{\textbf{Regret analysis}}} \\
         $\Omega$ & $\triangleq$ & Set of all the possible  assignments of $X = \{ X_i \}_{i \in [d_X]}$, $\Omega = \bigotimes_{i \in [d_X]} [N]$ \\
         $n$ & $\triangleq$ & Index on the support of random variables, $n \in [N]$ \\
         $\history$ & $\triangleq$ & History of observations $((x_{h,l}, r_{h,l}))_{h \in [H], l \in [k-1]}$ until episode $k$\\
         $\scopevar_k$ & $\triangleq$ & Random variable of the global factorization at episode $k$\\
         $\scopevar^k_j$ & $\triangleq$ & Random variable of the local factorization at episode $k$ for factor $j$\\
         $\scopevar_*$ & $\triangleq$ & Random variable of the true factorization\\
         $\scopevar_{j*}$ & $\triangleq$ & Random variable of the true factorization for $j$-th factor\\
         $\EV_k[\cdot]$& $\triangleq$ & Conditional expectation given history $\history$, $\EV_k[\cdot] := \EV[\cdot \mid \history]$\\
         $\PR_k[\cdot]$& $\triangleq$ & Conditional probability given history $\history$, $\PR_k[\cdot] := \PR[\cdot \mid \history]$\\
         
    \end{longtable}
\newpage

\section{Parametric priors and posterior updates}
\label{sec:posteriors}
In the following, we detail how the hyper-priors and priors of C-PSRL (Algorithm~\ref{alg:C-PSRL}) can be specified through parametric distributions, and how the corresponding parameters are updated with the evidence provided by the collected data.

The hyper-prior $ P_0 = \{ P_{0,j} \}_{j = 1}^{d_Y}$ is defined through $d_Y$ distributions over the set of \emph{local factorizations} $\Z_1, \ldots, \Z_{d_Y}$ resepctively, where each $\Z_j$ contains the parents assignments for the variable $Y_j$ consistent with the graph prior $\G_0$. Let assume any arbitrary ordering of the local factorizations $z_i \in \Z_j$, such that each $z_{i}$ is indexed by $i \in [|\Z_j|]$. Then, we can specify the hyper-prior $j$ as a categorical distribution
\begin{equation*}
P_{0,j} (z_i; \vomega) = \text{Cat} (i;\vomega) = \frac{\omega_i }{ \sum_{t} \omega_t},
\end{equation*}
where the sum is over $t \in [|\Z_j|]$, and the vector of parameters $\vomega$ is initialized as $\vomega = (1, \ldots, 1)$.

Then, for each local factorization $z_i \in \Z_j$ of the variable $Y_j$, we specify the prior $P_{0,j} (\cdot | z_i)$ over the model parameters of the corresponding transition factor $p_j$. The transition factor $p_j$ is an $(N^{|\Z_j|}, N)$ stochastic matrix. The prior is specified through a Dirichlet distribution for each row of $p_j$, \ie
\begin{equation*}
p_{0,j} (\cdot \ | \ z_i ; \valpha) = \text{Dir} (\theta_1, \ldots, \theta_N ; \alpha_1, \ldots, \alpha_{N}) = \frac{1}{B(\valpha) } \prod_{n = 1}^{N} \theta_n^{\alpha_n - 1},
\end{equation*}
where $\valpha$ is a vector of parameters initialized as $\valpha = (1, \ldots, 1)$ and $B(\alpha) = \prod_{n = 1}^N \Gamma (\alpha_n) \big/ \Gamma (\sum_{n} \alpha_n)$ is a normalizing factor.

Having specified the hyper-prior and prior, we now show how to update them with the new evidence. Let be $\theta_1, \ldots, \theta_N \sim P_{k,j} (\cdot | x[z_j]; \valpha)$, and assume to collect the transition $(x[z_j],y[j] = i)$ from the true FMDP $\fmdp_*$. Then, the posterior is
\begin{equation*}
P_{k + 1,j}(\theta_1, \ldots, \theta_N) \propto P(y[j] = i \ | \ \theta_1, \ldots, \theta_N) P_{k, j} (\theta_1, \ldots, \theta_N; \valpha)  \propto \theta_i \prod_{n = 1}^{N} \theta^{\alpha_n - 1},
\end{equation*}
which is still a Dirichlet distribution with parameters $\text{Dir} (\theta_1, \ldots, \theta_N; \alpha_1, \ldots, \alpha_i + 1, \ldots, \alpha_N)$. Then, we can propagate the posterior up the hierarchy to update the hyper-prior as
\begin{align}
    P_{k + 1, j} (z) &\propto P (y[j] = i \ | \ z) P_{t,j} (z;\vomega) \nonumber \\
    &\propto P_{k,j} (z; \vomega) \int P (y[j] = i \ | \ \theta_1, \ldots, \theta_N) P_{t,j} ( \theta_1, \ldots, \theta_N \ | \ z) \de ( \theta_1, \ldots, \theta_N) \label{eq:1} \\
    &\propto P_{k,j} (z; \vomega) \int \theta_i \frac{1}{B(\valpha)} \prod_{n = 1}^N \theta_n^{\alpha_n - 1} \de ( \theta_1, \ldots, \theta_N) \label{eq:2} \\
    &\propto P_{k,j} (z; \vomega) \frac{1}{B(\valpha)} B(\alpha_1, \ldots, \alpha_i + 1, \ldots, \alpha_N) \label{eq:3}\\
    &\propto P_{k,j} (z; \vomega) \frac{\alpha_{i} + 1}{\sum_{t} \alpha_t + 1} \label{eq:4}
\end{align}
where~\eqref{eq:2} is obtained by plugging the parametric prior in~\eqref{eq:1}, we derive \eqref{eq:3} by computing the integral over the simplex of $(\theta_1, \ldots, \theta_N)$, and \eqref{eq:4} follows from $\Gamma (\alpha_i + 1) \prod_{t \neq i} \Gamma (\alpha_t) = (\alpha_i + 1) \prod_t \Gamma (\alpha_t)$ and $\Gamma (\alpha_i + 1 + \sum_{t \neq i} \alpha_t) = \Gamma (\sum_t \alpha_t) \sum_{t}( \alpha_t + 1)$. The resulting posterior is still a categorical distribution with the parameters $\omega_i \gets \omega_i \frac{\alpha_{i} + 1}{\sum_{t} \alpha_t + 1}$ .

\newpage

\section{Note on computational complexity}
\label{sec:apx_computational_complexity}
As we mentioned in Section~\ref{sec:algorithm} (paragraph on ``Planning in FMDPs''), the C-PSRL algorithm is not fully tractable as it requires to solve exact planning in a FMDP (line 6). As we pointed out in the paper, the computational issue may be overcome through clever approximation schemes~\citep{guestrin2011efficient} or exploiting the structure of the specific FMDP instance. E.g., under linear realizability of the transition model or the value function, which means the transition and value function can be expressed as linear combinations of a vector of features, tractable exact planning methods have been developed~\citep{yang2019sample, jin2020provably, deng2022polynomial}. Furthermore, this computational issue is not specific to the C-PSRL algorithm but affects PSRL in FMDPs in general.

Nonetheless, C-PSRL actually induces an additional computational cost over the standard F-PSRL algorithm. This is the burn-in cost of computing the set of consistent factorizations $\Z$ in line 2 of Algorithm~\ref{alg:C-PSRL}. Notably, our method allows to build the set of consistent parents for each variable $Y_j$ independently (see Section~\ref{sec:algorithm} ``Recipe''), which means the process can be parallelized on $d_Y$ workers. For each worker, we need to build a set of $\sum_{i = 0}^{Z - \eta} \binom{d_X - \eta}{i}$ elements, which we can do recursively by calling the base function at most $O(2^{d_X - \eta})$ times. The latter result gracefully characterize how the degree of prior knowledge $\eta$ impacts the statistical (see Theorem~\ref{thr:bayesian_regret_theorem}) and computational complexity in a similar way.

While we underline again that this is not the computational bottleneck of C-PSRL, understanding how to avoid the computational burn-in (e.g., not pre-computing the whole set of factorizations but building it incrementally) is a nice direction for future works.
\newpage

\section{Weak causal discovery}
\label{sec:apx_causal_identification}
In the following, we show that we can extract, under a relatively mild causal minimality assumption, a $\sparseness$-sparse super-DAG of the true causal graph $\G_{\fmdp_*}$ as a byproduct of a run of Algorithm~\ref{alg:C-PSRL}. We call this result \emph{weak causal discovery}, to make a clear distinction between discovering a sparse super-DAG of a causal graph and true causal discovery, in which the minimal graph is discovered.

As required for any causal discovery algorithm, we need to state an assumption that connects the causal graph $\G_{\fmdp_*}$ with the distribution $p_*$ (\ie the transition model) from which our observations are sampled in an i.i.d. manner~\citep{spirtes2000causation}. Typically, in causal discovery, it is assumed that $p_*$ fulfills the faithfulness assumption with regard to $\G_{\fmdp_*}$, \ie every independence in $p_*$ implies a $d$-separation (see Definition~\ref{def:d-separation} below) in $\G_{\fmdp_*}$. 
Faithfulness, however, is a rather strong assumption which can be violated by path cancellations or xor-type dependencies, and weaker assumptions have been proposed~\citep{spirtes2000causation,pearl2009causality,marx:21:two-adjacencyff}. In this work, we build upon a strictly weaker assumption than faithfulness: causal minimality~\citep{spirtes2000causation}.\!\footnote{The definition refers SGS-minimality proposed by~\citet{spirtes2000causation}. There exists an alternative definition called P-minimality, proposed by \cite{pearl2009causality}. In our setting, both assumptions are equivalent, since they only differ on graphs that violate triangle faithfulness~\citep{zhang2020comparison,zhang2008detection}. Since no nodes within $\leftnodes$ or within $\rightnodes$ are allowed to be adjacent, such triangle structures cannot occur within our assumptions.}
\begin{definition}[Causal Minimality]
A distribution $P$ satisfies causal minimality with respect to a DAG $\G$ if $P$ fulfills the Markov factorization property with respect to $\G$, but not with respect to any proper subgraph of $\G$.
\end{definition}

\paragraph{Intuition.} More intuitively, a distribution is minimal with respect to $\G$ if and only if there is no node that is conditionally independent of any of its parents, given the remaining parents~\citep{peters2017elements}. There are two important points in this statement: i) none of the parents of a node is redundant, and ii) the dependence to a parent may only be detected given the remaining parents. Aspect ii) is a strictly weaker statement than required by faithfulness, which can be illustrated with a simple example. Consider the causal structure $X \to Y \leftarrow Z$, where all random variables are binary. If we generate $X$ and $Z$ via an unbiased coin and assign $Y$ as $Y := X \text{ xor } Z$, $Y$ will be marginally independent of $X$, as well as marginally independent of $Z$. However, $Y$ is not independent of $X$ (resp. $Z$) when we condition on its second parent $Z$ (resp. $X$). Such an example violates faithfulness, \ie there is a causal edge that is not matched by a dependence, but it does not violate causal minimality. For a more detailed discussion on such triples, we refer to~\citep{marx:21:two-adjacencyff}.

In our context, Algorithm~\ref{alg:C-PSRL} has a positive probability of sampling all parents jointly (or a superset of them), and does not rely on checking pairs individually. Therefore, we can build upon the weaker assumption, causal minimality. Beyond identifiability in the limit, we are interested in the finite sample behaviour of our approach. Therefore, we propose a slightly stronger assumption for the value function, which is inspired by causal minimality.

\DefEpsValueMinimality*

Intuitively, $\epsilon$-value minimality ensures that if we were to miss a true parent, the resulting optimal value function would be at most $\epsilon$-optimal compared to the optimal value function evaluated on a graph that contains all true parents. Based on this rather lightweight assumption, we can extract from Algorithm~\ref{alg:C-PSRL} a graph $\G_{\fmdp_K}$ that is guaranteed to be either the true DAG $\G_{\fmdp_*}$, or a $\sparseness$-sparse super-DAG of $\G_{\fmdp_*}$ with high probability.


\CorCausalIdentification*
\begin{proof}
    From Theorem~\ref{thr:bayesian_regret_theorem}, we have that the $K$-episodes Bayesian regret of Algorithm~\ref{alg:C-PSRL} is
    \begin{equation*}
        \EV \left[ \sum_{k = 0}^{\numep - 1} V_* (\pi_*) - V_* (\pi_t) \right] \leq C_1 \sqrt{H^5 d_Y^2 2^{d_X - \eta}} \cdot \sqrt{K},
    \end{equation*}
    with high probability for some constant $C_1$ that does not depend on $K$. Through a standard regret-to-pac argument~\citep{jin2018q}, it follows
    \begin{equation}
        \EV \left[ V_* (\pi_*) - V_* (\pi_K) \right] \leq C_2 \sqrt{H^5 d_Y^2 2^{d_X - \eta}} \cdot \frac{1}{\sqrt{K}}
        \label{eq:pac}
    \end{equation}
    with high probability for some constant $C_2$ that does not depend on $K$, and for a policy $\pi_K$ that is randomly selected within the sequence of policies $\{ \pi_k \}_{k = 0}^{K - 1}$ produced by Algorithm~\ref{alg:C-PSRL}. By noting that $\pi_K$ can be $\epsilon$-optimal in the true FMDP $\fmdp_*$ only if $\G_{\fmdp_*} \subseteq \G_{\fmdp_K}$ through the $\epsilon$-value minimality assumption (Definition~\ref{ass:minimality-value-function}), we let $\EV \left[ V_* (\pi_*) - V_* (\pi_K) \right] = \epsilon$ in~\eqref{eq:pac}, which gives $K \geq C_2 H^5 d_Y^2 2^{d_X - \eta} / \epsilon^2$ and concludes the proof.
\end{proof}

\paragraph{$d$-Separation.} For the reader's convenience, here we report a brief definition of $d$-separation. More details can be found in~\citep{peters2017elements}.

\begin{definition}[$d$-Separation]
\label{def:d-separation}
A path $\langle X, \dots, Y \rangle$ between two vertices $X,Y$ in a DAG is $d$-connecting given a set $\bm{Z}$, if
\begin{enumerate}
    \item every collider\footnote{A collider $C$ on a path $\langle \dots, Q, C, W, \dots \rangle$ is a node with two arrowhead pointing towards it, i.e.~$\to C \leftarrow$.} on the path is an ancestor of $\bm{Z}$, and
	\item every non-collider on the path is not in  $\bm{Z}$.
\end{enumerate}
If there is no path $d$-connecting $X$ and $Y$ given $\bm{Z}$, then $X$ and $Y$ are $d$-separated given $\bm{Z}$. Sets $\bm{X}$ and $\bm{Y}$ are $d$-separated given $\bm{Z}$, if for every pair $X, Y$, with $X \in \bm{X}$ and $Y \in \bm{Y}$, $X$ and $Y$ are $d$-separated given $\bm{Z}$.
\end{definition}
\newpage

\section{Regret analysis}
\label{sec:proofs}
\allowdisplaybreaks

In this section, we provide the full derivation of the following result.

\bayesianRegret*



On a high level, the proof is made up of two parts. The first part (presented in Section~\ref{subsection: regret_decomposition})  consists of decomposing the Bayesian regret into two components and then upper bounding the two expressions separately. This leads to the intermediate regret bound for a general latent hypothesis space, \ie where the hypothesis space is not necessarily a product space, reported in Section~\ref{sec:int_regret_1}. The second part of the proof refines the analysis by considering a product latent hypothesis space (Section~\ref{subsection: product_latent_space}) and the degree of prior knowledge (Section~\ref{subsection: prior_knowledge}), ultimately reaching the theorem statement.

We define the set $\Omega = \bigotimes_{i \in [d_X]} [N]$ of all the possible  assignments of $X = \{ X_i \}_{i \in [d_X]}$. For the sake of concision, we will denote $p_k\left(y[j]\mid x[\scopes_j]\right)$ as $p_k\left(x[\scopes_j]\right)$ where it will not lead to ambiguity. Moreover, we denote $\EV_k[\cdot] := \EV[\cdot \mid \mathcal{H}_k]$ and $\PR_k[\cdot] := \PR[\cdot \mid \mathcal{H}_k]$ the conditional expectation and probability given the history of observations $\mathcal{H}_k = ((x_{h,l}, r_{h,l}))_{h \in [H], l \in [k-1]}$ collected until episode $k$. Auxiliary results and lemmas mentioned alongside the analysis are reported in the Sections \ref{subsection: confidence_intervals} and \ref{subsection:auxiliary_lemmas}.

\subsection{Analysis for a general latent hypothesis space}

We first report a decomposition of the Bayesian regret and then proceeds to bound each component separately, which are then combined in a single regret rate.

\paragraph{Bayesian regret decomposition.}
\label{subsection: regret_decomposition}
For episode $k$, we define $\overline{V}_k(\pi, z) = \EV_{\fmdp \sim P_k(\cdot \mid z)}\left[V_\fmdp(\pi)\right]$ as the expected value of policy $\pi$ according to the posterior conditioned on the latent factorization $z \sim P_k$ and history $\history$.
As shown in \citep[Proposition 1]{russo2013ps} for the bandit setting and in \citep[Section 5.1, Equation 6]{hong2022thompson} for the reinforcement learning setting, we can decompose the Bayesian regret as 
\begin{equation}
    \label{eq:bayesian_regret}
    \BR(n) = \EV \left[\sum_{k=1}^\numep \EV_k \big[V_*(\pi_*) - \overline{V}_k(\pi_*, Z_*)\big]\right] + \EV \left[\sum_{k=1}^\numep \EV_k \big[\overline{V}_k(\pi_k, Z_k) - V_*(\pi_k)\big]\right]
\end{equation}
by adding and subtracting $\overline{V}_k(\pi_*, Z_*)$ and noticing that $\pi_*, Z_*$ are identically distributed to $\pi_k, Z_k$ given $\history$. Notice that $Z_k$ and $Z_*$ indicate random variables, while we will indicate with the lowercase counterpart specific values of these random variables.
The first term represents the regret incurred due to the concentration of the posteriors of the reward and transition models given the true factorization, while the second term captures the cost to identify the true latent factorization. We will bound each term of \eqref{eq:bayesian_regret} separately.

\paragraph{Upper bounding the first term of~\eqref{eq:bayesian_regret}.}
For episode $k$, we define the event
\begin{align*}
    E_k = \bigg\{\forall \;& j \in [d_Y], \forall \; x[\scopes_j] \in \Omega : \big| R_{\fmdp_k}(x[z_j]) - \bar{r}_k(x[\scopes^k_j])\big| \leq c_k(x[\scopes^k_j])\\
    & \quad \text{and } \|\tran_{\fmdp_k}(x[\scopes_j]) - \bar{\tran}_k(x[\scopes^k_j]) \|_1 \leq \phi(x[\scopes^k_j]) \bigg\}
\end{align*}
where the quantities are defined as follows. $\fmdp_k$ denotes the FMDP sampled at episode $k$ having mean reward $R_{\fmdp_k} (x[z_j])$ and transition model $p_{\fmdp_k} (x[z_j])$ for all $x[z_j] \in \Omega$.
The expression $\bar{r}_k(x[\scopes_j]) = \EV_{\fmdp \sim P_k(\cdot | \scopes)}[R_\fmdp(x[\scopes_j])]$ denotes the posterior mean of $R_{\fmdp_k} (x[z_j])$, while $\bar{\tran}_k(x[\scopes_j]) = \left(\bar{\tran}_k(y[j] = n \mid x[\scopes_j])\right)_{n \in [N]}$ with $\bar{\tran}_k(y[j] = n \mid x[\scopes_j]) = \EV_{\fmdp \sim P_k(\cdot | z)}\left[\tran_\fmdp(y[j] = n \mid x[\scopes_j])\right]$ denotes the posterior mean transition probability vector of size $N$ for the $j$-th factor given a factorization $\scopes$. With $c_k(x[\scopes^k_j])$ and $\phi(x[\scopes^k_j])$ we denote high-probability confidence widths for the $j$-th factor of the mean reward and transition model respectively. A detailed derivation of such confidence widths can be found in Section~\ref{subsection: confidence_intervals}. 
Informally, the event $E_k$ expresses how close the mean rewards and transition models sampled at the episode $k$ are to their posterior means. We refer with $\bar{E}_k$ to the complementary event of $E_k$.

Now, by reminding that $\pi_*, Z_*$ are identically distributed to $\pi_k, Z_k$ given $\history$, we can rewrite each element of the sum within the first term of (\ref{eq:bayesian_regret}) as
\begin{align}
    &\EV_k \big[V_{\fmdp_k}(\pi_k) - \overline{V}_k(\pi_k, Z_k)\big] \nonumber\\
    &\labelrel={step:step_1} \EV_k \bigg[ \EV_{\fmdp \sim P_k(\cdot \mid Z_k)}\big[V_{\fmdp_k}(\pi_k) - V_\fmdp(\pi_k)\big]\bigg] \nonumber\\
    &\labelrel\leq{step:step_2} \EV_k \bigg[\sum_{h=1}^\horizon  \big(R_{\fmdp_k}(S_{k,h}, A_{k,h}) - \bar{r}_k(S_{k,h}, A_{k,h}, Z_k) \big) + \horizon \|\tran_{\fmdp_k}(S_{k,h}, A_{k,h}) - \bar{\tran}_k(S_{k,h}, A_{k,h}, Z_k) \|_1 \bigg] \nonumber\\
    &\labelrel\leq{step:step_3} \EV_k \bigg[\sum_{h=1}^\horizon \sum_{j=1}^{d_Y}  \big(R_{\fmdp_k}(X_{k,h}[Z_j]) - \bar{r}_k(X_{k,h}[Z^k_j]) \big) + \horizon \sum_{j=1}^{d_Y}\|\tran_{\fmdp_k}(X_{k,h}[Z_j]) - \bar{\tran}_k(X_{k,h}[Z^k_j]) \|_1 \bigg] \nonumber\\
    &\leq \EV_k \bigg[\horizon \sum_{h=1}^\horizon \sum_{j=1}^{d_Y}  \big(R_{\fmdp_k}(X_{k,h}[Z_j]) - \bar{r}_k(X_{k,h}[Z^k_j]) \big) + \|\tran_{\fmdp_k}(X_{k,h}[Z_j]) - \bar{\tran}_k(X_{k,h}[Z^k_j]) \|_1 \bigg] \nonumber\\
    &\labelrel\leq{step:step_4} \EV_k \bigg[\horizon \sum_{h=1}^\horizon \sum_{j=1}^{d_Y} \big( R_{\fmdp_k}(X_{k,h}[Z_j]) - \bar{r}_k(X_{k,h}[Z^k_j]) + \|\tran_{\fmdp_k}(X_{k,h}[Z_j]) - \bar{\tran}_k(X_{k,h}[Z^k_j]) \|_1\big)\mathds{1}\{\bar{E}_k\} \bigg] \nonumber\\
    &\quad\quad\quad +  \EV_k \bigg[\horizon\sum_{h=1}^\horizon \sum_{j=1}^{d_Y} \big( c_k(X_{k,h}[Z^k_j]) + \phi_k(X_{k,h}[Z^k_j])\big)\mathds{1}\{E_k\} \bigg] \label{eq:rewriting_first_term}
\end{align}
where we have used the definition of $\overline{V}_k(\pi_k, Z_k)$ in step \eqref{step:step_1}, Lemma \ref{value_diff_lemma} in step \eqref{step:step_2}, Lemma \ref{triangle_inequality_bound} in step \eqref{step:step_3}, and the definition of $E_k$ in step \eqref{step:step_4}.

By defining $\beta_k(X_{k,h}[Z^k_j]) := c_k(X_{k,h}[Z^k_j]) + \phi_k(X_{k,h}[Z^k_j])$ as the sum of both confidence widths, we can bound the second term of \eqref{eq:rewriting_first_term} by using Lemma \ref{bound_confidence_widths}, while we bound the first term of the same equation by showing that $\bar{E}_k$ conditioned on $\history$ is unlikely.
We rewrite the first term of~\eqref{eq:bayesian_regret} as
\begin{align}
     \horizon \sum_{h=1}^\horizon \sum_{j=1}^{d_Y} \Bigg(\EV_k\bigg[& \big( R_{\fmdp_k}(X_{k,h}[Z_j]) - \bar{r}_k(X_{k,h}[Z^k_j]) \big) \mathds{1}\{\bar{E}_k\}\bigg]  \nonumber\\
     &+ \EV_k\bigg[\|\tran_{\fmdp_k}(X_{k,h}[Z_j]) - \bar{\tran}_k(X_{k,h}[Z^k_j]) \|_1 \mathds{1}\{\bar{E}_k\} \bigg]\Bigg) \label{eq:sum_decomposition_first}
\end{align}
where we have distributed the indicator function.
For the first term within the sums of \eqref{eq:sum_decomposition_first}, we have
\begin{align}
    \EV_k\bigg[ \big(R_{\fmdp_k}(X_{k,h}[Z_j]) &- \bar{r}_k(X_{k,h}[Z^k_j])\big)\mathds{1}\{\bar{E}_k\}\bigg] \\
    &\leq \sum_{x[Z_j] \in \Omega} \int_{r=c_k(X_{k,h}[Z^k_j])}^{\infty} r \PR_k \big( (R_{\fmdp_k}(x[Z_j]) - \bar{r}_k(x[Z^k_j]))  = r \big) dr\\
    &\leq \sum_{x[Z_j] \in \Omega} \PR_k (R_{\fmdp_k}(x[Z_j]) - \bar{r}_k(x[Z^k_j]) \geq c_k(x[Z^k_j]))\\
    &\labelrel\leq{step:step_5} \sum_{x[Z_j] \in \Omega} \exp\left(-\frac{c_k(x[Z^k_j])^2}{\nicefrac{2}{4(\|\alpha^R_{k}(x[Z^k_j])\|_1 +1)}}\right)\\
    &\labelrel={step:step_6} \sum_{x[Z_j] \in \Omega} \exp\left(-\frac{\frac{ \log(2 \numep d_YN^\sparseness)}{2\left(\|\alpha^R_{k}(x[Z^k_j])\|_{1}+1\right)}}{\nicefrac{2}{4(\|\alpha^R_{k}(x[Z^k_j])\|_1 +1)}}\right)\\
    &= \sum_{x[Z_j] \in \Omega} \exp\left(- \log(2\numep d_YN^\sparseness)\right)= \frac{1}{2\numep d_Y} \label{eq:result_2_p1}
\end{align}
In step \eqref{step:step_5} we have used Lemma \ref{subgaussianity_bound} and \ref{subgaussianity}, and in step \eqref{step:step_6} we have plugged-in the definition of $c_k(x[Z^k_j])$ from \eqref{eq:reward_conf_width}, where $\alpha^R_{k}(x[Z^k_j])$ represents the parameters of the posterior over the mean reward for the $j$-th factor at episode $k$ given factorization $Z^k_j$. For the second term within the sums of \eqref{eq:sum_decomposition_first}, we have
\begin{align}
    \EV_k\bigg[& \|\tran_{\fmdp_k}(X_{k,h}[Z_j]) - \bar{\tran}_k(X_{k,h}[Z^k_j])\|_1\mathds{1}\{\bar{E}_k\}\bigg] \nonumber\\ 
    &\labelrel\leq{step:step_7} N\EV_k\bigg[\max_{n \in [N]} |\tran_{\fmdp_k}(X_{k,h}[Z_j], n) - \bar{\tran}_k(X_{k,h}[Z^k_j], n)|\mathds{1}\{\bar{E}_k\}\bigg] \nonumber\\
    & \labelrel\leq{step:step_8} \sum_{x[Z_j] \in \Omega} \sum_{n \in [N]} \int_{p=\nicefrac{\phi_k(X_{k,h}[Z^k_j])}{\sqrt{N}}}^{\infty} p \PR_k (|\tran_{\fmdp_k}(X_{k,h}[Z_j], n) - \bar{\tran}_k(X_{k,h}[Z^k_j], n)| = p) \; dp \nonumber\\
    &\leq 2 \PR_k \left(|\tran_{\fmdp_k}(X_{k,h}[Z_j], n) - \bar{\tran}_k(X_{k,h}[Z^k_j], n)| \geq \frac{\phi_k(x[Z^k_j])}{\sqrt{N}}\right) \nonumber\\
    &\labelrel\leq{step:step_8e} \sum_{x[Z_j] \in \Omega} \sum_{n \in [N]} 2 \exp\left(-\frac{\phi_k(x[Z^k_j])^2}{\nicefrac{2N}{4(\|\alpha^P_{k}(x[Z^k_j])\|_1 +1)}}\right) \nonumber\\
    &= \frac{N}{\numep d_Y} \label{eq:result_1_p1}
\end{align}
The steps are analogous to the ones for upper bounding the first term of \eqref{eq:sum_decomposition_first}. Specifically, in step \eqref{step:step_7} we use a trivial upper bound on the $l^1$-norm and in step \eqref{step:step_8} we divide the confidence width by the square root of the vector length $\sqrt{N}$ according to lemma \citep[Theorem 5.4.c]{lattimore2020bandit}. The parameters $\alpha^P_{k}(x[Z^k_j])$ introduced in step \eqref{step:step_8e} represent the parameters of the posterior over the transition model for the $j$-th factor at episode $k$ given factorization $Z^k_j$.

By plugging \eqref{eq:result_2_p1} and \eqref{eq:result_1_p1} into  \eqref{eq:sum_decomposition_first} and then \eqref{eq:sum_decomposition_first} into \eqref{eq:rewriting_first_term}, we can bound the first term of \eqref{eq:bayesian_regret} as
\begin{equation}
    \EV \left[\sum_{k=1}^\numep \EV_k \big[V_*(\pi_*) - \overline{V}_k(\pi_*, Z_*)\big]\right] \leq N\horizon^2 + \horizon\sum_{k=1}^\numep \sum_{h=1}^\horizon \sum_{j=1}^{d_Y} \EV_k\big[\beta_k(X_{k,h}[Z^k_j])\big]
\end{equation}
where we recall that $\beta_k(X_{k,h}[Z^k_j]) := c_k(X_{k,h}[Z^k_j]) + \phi_k(X_{k,h}[Z^k_j])$.

\paragraph{Upper bounding the second term of~\eqref{eq:bayesian_regret}.}

Since there is no fundamental distinction between latent states in the tabular and factored MDP settings, our analysis in this section is aligned with \citep[Appendix B.3, step 2]{hong2022thompson} and aims at effectively translating it into the factored MDPs notation.

In order to bound the second term of~\eqref{eq:bayesian_regret}, we first need to define confidence sets over latent factorizations. For each episode $k$, we define a set of factorizations $C_k$ so that $Z_* \in C_k$ with high probability. Since the latent factorization is unobserved, we can only exploit a proxy statistic for how well the model parameter posterior of each latent factorization predicts the rewards. We start defining a counting function $N_k(z) = \sum_{l=1}^{k-1}\mathds{1}\{Z_l = z\}$ as the number of times the factorization $z$ has been sampled until episode $k$. Next, we define the following statistic associated with a factorization $z$ and episode $k$,
\begin{equation*}
    G_k(z) = \sum_{l=1}^{k-1}\mathds{1}\{Z_l = z\}\left(\overline{V}_l(\pi_l, z) - {H} \sqrt{2}\sum_{h=1}^\horizon \sum_{j=1}^{d_Y}\beta_l(X_{l,h}[z_j]) - \sum_{h=0}^{\horizon-1}\sum_{j=1}^{d_Y}R_{l,h}[j] \right)
\end{equation*}
The latter represents the total under-estimation of observed returns, as it expresses the difference between the lower confidence bound on the returns and the observed ones, assuming that $z$ is the true latent factorization.
Now we can define $C_k = \{z \in \mathcal{Z} : G_k(z) \leq \sqrt{\horizon N_k(z)\log \numep}\}$ as the set of latent factorizations with at most $\sqrt{\horizon N_k(z)\log \numep}$ excess. In the following, we show that $Z_* \in C_k$ holds with high probability for any episode.

Fix $Z_* = z$. Let $\mathcal{T}_{k,z} = \{ l < k : Z_l = z \}$ the set of episodes where $z$ has been sampled until episode $k$. We will first upper bound $G_k(z)$ by a martingale with respect to the history, then bound the martingale using Azuma-Hoeffding's inequality.
We define the event
\begin{align*}
     \mathcal{E}_{k,h,j} = \bigg\{& \left| R_{\fmdp_k}(X_{k,h}[Z_j]) - \bar{r}_k(X_{k,h}[Z^k_j])\right| \leq \sqrt{2}c_k(X_{k,h}[Z^k_j])\\
     & \text{and } \|\tran_{\fmdp_k}(X_{k,h}[Z_j]) - \bar{\tran}_k(X_{k,h}[Z^k_j]) \|_1 \leq \sqrt{2}\phi_k(X_{k,h}[Z^k_j]) \bigg\}
\end{align*}
in which the sampled reward and transition probabilities for factor $j$ in step $h$ of episode $k$ are close to their posterior means. Let $\mathcal{E} = \cup_{k=1}^\numep \cup_{h=1}^\horizon \cup_{j=1}^{d_Y} \mathcal{E}_{k,h,j}$ be the event that this holds for every factor, step, and episode. By union bound we have that 
\begin{align*}
\PR_k(\bar{\mathcal{E}}_{k,h,j}) &\leq \PR_k(\left| R_{\fmdp_k}(X_{k,h}[Z_j]) - \bar{r}_k(X_{k,h}[Z^k_j])\right| \geq \sqrt{2}c_k(X_{k,h}[Z^k_j])) \\
    &\quad+ \PR_k( \|\tran_{\fmdp_k}(X_{k,h}[Z_j]) - \bar{\tran}_k(X_{k,h}[Z^k_j]) \|_1 \geq \sqrt{2}\phi_k(X_{k,h}[Z^k_j]))\\
    &\labelrel\leq{step:step_9} \exp\left(\frac{2 c_k(X_{k,h}[Z^k_j])^2}{\sigma^2}\right) + \exp\left(\frac{2 \phi_k(X_{k,h}[Z^k_j])^2}{N\sigma^2}\right)\\
    &\leq (\numep d_YN^{d_X})^{-2}
\end{align*}
where we have used Lemmas \ref{subgaussianity_bound} and \ref{subgaussianity} in step \eqref{step:step_9} and $\bar{\mathcal{E}}_{k,h,j}$ is the complementary event of $\mathcal{E}_{k,h,j}$.
Hence, for $\bar{\mathcal{E}} = \cup_{k=1}^\numep \cup_{h=1}^\horizon \cup_{j=1}^{d_Y} \mathcal{\bar E}_{k,h,j}$, we have
\begin{equation*}
    \PR(\bar{\mathcal{E}}) = \sum_{k=1}^\numep \sum_{h=1}^\horizon \sum_{z \in \mathcal{Z}} \sum_{j=1}^{d_Y}\sum_{x[Z_j] \in \Omega}  \PR_k(\bar{\mathcal{E}}_{k,h,j}) \leq \sum_{k=1}^\numep \sum_{h=1}^\horizon \sum_{z \in \mathcal{Z}} \sum_{j=1}^{d_Y}\sum_{x[Z_j] \in \Omega} (\numep d_YN^{d_X})^{-2} \leq \horizon |\fact|\numep^{-1}
\end{equation*}
For episode $l \in \mathcal{T}_{k,z}$, let $\Delta_l = V_*(\pi_l) - \sum_{h=1}^\horizon \sum_{j=1}^{d_Y}R_{l,h}[j]$. Since $\EV_l[\Delta_l] = 0$, $(\Delta_l)_{l \in \mathcal{T}_{k,z}}$ is a martingale difference sequence with respect to the histories $(\mathcal{H}_l)_{l \in \mathcal{T}_{k,z}}$.
Following exactly the same steps as in \cite[Proof of Lemma 7]{hong2022thompson}, we derive an upper bound on the probability of $Z_*$ not being in the factorizations set $C_k$, namely:
\begin{equation}
\label{eq: neg_event_latent_state}
    \PR(Z_* \notin C_k) \leq 2 |\fact|\horizon \numep^{-1}
\end{equation}

We can now decompose the second term of~\eqref{eq:bayesian_regret} according to whether the sampled latent factorization is in $C_k$ or not. Formally, we have
\begin{align}
    \EV \left[\sum_{k=1}^\numep \EV_k \big[\overline{V}_k(\pi_k, Z_k) - V_*(\pi_k)\big]\right] \leq  \EV& \bigg[\sum_{k=1}^\numep  \left(\overline{V}_k(\pi_k, Z_k) - V_*(\pi_k)\bigg)\mathds{1}\{Z_k \in C_k\}\right] \nonumber \\
    & + \horizon \sum_{k=1}^\numep \PR(Z_* \notin C_k) \label{eq:latent_state_event}
\end{align}
From the previous steps, using \eqref{eq: neg_event_latent_state}, we have that the second term of \eqref{eq:latent_state_event} is upper bounded by $2 |\fact|\horizon^2$, while in the following we derive an upper bound for the first term of \eqref{eq:latent_state_event} as in \citep[Appendix B.3, step 4]{hong2022thompson}. We have
\begin{align*}
    \EV &\bigg[\sum_{k=1}^\numep  \left(\overline{V}_k(\pi_k, Z_k) - V_*(\pi_k)\bigg)\mathds{1}\{Z_k \in C_k\}\right] \\
    &= \horizon \sqrt{2}\EV \left[\sum_{k=1}^\numep \sum_{h=1}^\horizon \sum_{j=1}^{d_Y}\beta_k(X_{k,h}[Z^k_j]) \right] \\
    &\quad\quad+ \EV \left[\sum_{k=1}^\numep \left(\overline{V}_k(\pi_k, Z_k) - \horizon \sqrt{2}\sum_{h=1}^\horizon \sum_{j=1}^{d_Y}\beta_k(X_{k,h}[Z^k_j]) - \sum_{h=1}^{\horizon}\sum_{j=1}^{d_Y}R_{k,h}[j] \right)\mathds{1}\{Z_k \in C_k\} \right]\\
    &\labelrel\leq{step:step_9e} \horizon \sqrt{2}\EV \left[\sum_{k=1}^\numep \sum_{h=1}^\horizon \sum_{j=1}^{d_Y}\beta_k(X_{k,h}[Z^k_j]) \right] + \EV \left[ \sum_{z \in \mathcal{Z}}G_{\numep+1}(z) +  |\fact|\horizon\right]\\
    &\labelrel\leq{step:step_9ee} \horizon \sqrt{2}\EV \left[\sum_{k=1}^\numep \sum_{h=1}^\horizon \sum_{j=1}^{d_Y}\beta_k(X_{k,h}[Z^k_j]) \right] + \sqrt{ |\fact|\numep \horizon \log \numep} +  |\fact|\horizon
\end{align*}
where in step \eqref{step:step_9e} we use the definition of $G_{K+1}(z)$ and in step \eqref{step:step_9ee} we upper bound the same quantity.

\paragraph{Bayesian regret for a general latent hypothesis space.}
\label{sec:int_regret_1}
Combining the upper bounds of the two terms of~\eqref{eq:bayesian_regret}, we get
\begin{align*}
    \mathcal{BR}(\numep) &\leq N\horizon^2 + \horizon\sum_{k=1}^\numep \sum_{h=1}^\horizon \sum_{j=1}^{d_Y} \EV_k\big[\beta_k(X_{k,h}[Z^k_j])\big] + \horizon \sqrt{2}\EV \left[\sum_{k=1}^\numep \sum_{h=1}^\horizon \sum_{j=1}^{d_Y}\beta_k(X_{k,h}[Z^k_j]) \right] \\
    &\quad \quad+ \sqrt{ |\fact|\numep \horizon \log \numep} +  |\fact|\horizon + 2 |\fact|\horizon^2\\
    &\labelrel\leq{step:step_10} N\horizon^2 + 3\horizon^2d_YN^{d_X} + 3\horizon^2d_YN\sqrt{N^{d_X}\numep\horizon\log(4\numep d_YN^{d_X})\log\left(1 + \frac{\numep \horizon}{2N^{d_X}\Lambda_{0,z}}\right)} \\
    &\quad \quad+ \sqrt{ |\fact|\numep \horizon \log \numep} + 3 |\fact|\horizon^2
\end{align*}
where in step \eqref{step:step_10} we have used Lemma \ref{bound_confidence_widths}, and we denote 
$$\Lambda_{0,z} = \min\left\{\min_{j, x[\scopes_j]} \|\alpha^R_{0}(x[\scopes_j])\|_1, \min_{j, x[\scopes_j]} \|\alpha^P_{0}(x[\scopes_j])\|_1\right\}$$
Due to the $\sparseness$-sparseness assumption, we can rewrite the Bayesian regret as
\begin{align*}
    \mathcal{BR}(\numep) &\leq N\horizon^2 + 3\horizon^2d_YN^{\sparseness} + 3\horizon^2d_YN\sqrt{N^{\sparseness}\numep \horizon \log(4\numep d_YN^{\sparseness})log\left(1 + \frac{\numep\horizon}{2N^{\sparseness}\Lambda_{0,z}}\right)} \\
    &\quad \quad + \sqrt{ |\fact|\numep\horizon\log \numep} + 3 |\fact|\horizon^2\\
    &= \tilde{O}\left(\horizon^2d_YN^{\sparseness}  + \horizon^{\frac{5}{2}}d_YN^{1+\frac{\sparseness}{2}}\sqrt{\numep} + \sqrt{ |\fact|\numep\horizon} +  |\fact|\horizon^2 \right)
\end{align*}
Notably, this rate is sublinear in the number of episodes $\numep$ and latent factorization $|\fact|$, exponential in the degree of sparseness $\sparseness$.

\subsection{Refinement 1: Product latent hypothesis spaces}
\label{subsection: product_latent_space}

As we briefly explained in Section~\ref{sec:algorithm}, C-PSRL samples the factorization $z$ from the product space $\Z = \Z_1 \times \ldots \times \Z_{d_Y}$ by combining independent samples $z_j \in \Z_j$ for each variable $Y_j$. This allows us to refine the dependence in $|\Z|$ to $\overline{C} := \max_{j \in [d_Y]} |\Z_j| \leq |\Z|$.
We can replicate the same steps of the previous section in order to derive the Bayesian regret for the setting with a product latent hypothesis space. For the sake of clarity, we report here the main steps highlighting the difference with the previous section.

For an episode $k \in [\numep]$, we define $C^j_k = \bigg\{z_j \in \fact_j : G^j_k(\bar{z}) \leq \sqrt{\horizon N^j_k(z_j) \log \numep}\bigg\}$ where
\begin{align*}
    G^j_k(z_j) = \sum_{l=1}^{k-1}\mathds{1}\{Z^l_j = z_j\}\horizon \sum_{h=1}^\horizon & \Big( \big( \bar{r}_l(X_{l,h}[z_j]) - R_{l,h}[j]\big) \\
    &+ \| \bar{\tran}_k(X_{l,h}[z_j]) - \tran_*(X_{l,h}[z_{j*}]) \|_1 - \sqrt{2}\beta_k(X_{l,h}[z_j]) \Big)
\end{align*}
and $N^j_k(z_j) = \sum_{l=1}^{k-1} \mathds{1}\{Z^l_j = z_j\}$. While $G^j_k(z_j)$ captures the under-estimation of the observed returns at the level of a single factor, $N^j_k(z_j)$ counts the number of times that the local factorization $z_j$ has been sampled for node $j$ until episode $k$. 
Next, we define $\mathcal{T}^j_{k,z_j} = \{l<k: Z^l_j=z_j\}$ as the set of episodes where $z_j$ has been sampled for node $j$.

First, we can derive an upper bound of $\PR(\bar{\mathcal{E}})$ depending on $\overline{C}$ by noticing that the inner-most sum depends only on the local factorization hence we can swap the two preceding sums over $\mathcal{Z}$ and $d_Y$ as shown in step \eqref{step:step_13} of the following:
\begin{align*}
    \PR(\bar{\mathcal{E}}) &= \sum_{k=1}^\numep \sum_{h=1}^\horizon \sum_{z \in \mathcal{Z}} \sum_{j=1}^{d_Y}\sum_{x[z_j] \in \Omega}  \PR_k(\bar{\mathcal{E}}_{k,h,j})\\
    &\leq \sum_{k=1}^\numep \sum_{h=1}^\horizon \sum_{z \in \mathcal{Z}} \sum_{j=1}^{d_Y}\sum_{x[z_j] \in \Omega} (\numep d_YN^{d_X})^{-2}\\
    &=\labelrel\leq{step:step_13} \sum_{k=1}^\numep \sum_{h=1}^\horizon \sum_{j=1}^{d_Y} \sum_{z_j \in \fact_j} \sum_{x[z_j] \in \Omega} (\numep d_Y N^{d_X})^{-2}\\
    &\leq \horizon \overline{C}\numep^{-1}
\end{align*}
Now, we wish to upper bound $\PR(Z_{j*} \notin C^j_k)$. For episode $l \in \mathcal{T}^j_{k,z_j}$ let $\Delta^j_l = \sum_{h=1}^\horizon R_*(X_{l,h}[Z_{j*}]) - \sum_{h=1}^\horizon R_{l,h}[j]$. Since $\EV_l[\Delta^j_l] = 0$ we have that $(\Delta^j_l)_{l \in \mathcal{T}^j_{t,z_j}}$ is a martingale difference sequence with respect to the histories $(\mathcal{H}_l)_{l \in \mathcal{T}^j_{t,z_j}}$.

By following the same steps as in \cite{hong2022thompson}, we get
\begin{align*}
    G^j_k(z_j)\mathds{1}\{\mathcal{E}\} &= \sum_{l \in \mathcal{T}^j_{t,z_j}} \horizon \sum_{h=1}^\horizon \Big( \big( \bar{r}_l(X_{l,h}[z_j]) - R_{l,h}[j]\big) \\ 
    &\qquad \qquad \qquad + \| \bar{p}_k(X_{l,h}[z_j]) - \tran_*(X_{l,h}[z_{j*}]) \|_1 - \sqrt{2}\beta_k(X_{l,h}[z_j]) \Big)
    \leq \sum_{l \in \mathcal{T}^j_{k,z_j}} \Delta^j_l
\end{align*}
and by fixing $|\mathcal{T}^j_{t,z_j}|=N^j_t(z_j)=u<t$, and using Azuma-Hoeffding's inequality, we derive
\begin{equation*}
    \PR_k\left(G^j_k(\bar{z})\mathds{1}\{\mathcal{E}\} \geq \sqrt{\horizon u \log \numep}\right) \leq \PR\left(\sum_{l \in \mathcal{T}^j_{k,z_j}} \Delta^j_l \geq \sqrt{Hu \log \numep} \right) \leq \numep^{-2}.
\end{equation*}
Therefore, by using union bounds, we can write
\begin{align}
    \PR\left(Z_* \notin C_k\right) &\leq \sum_{j=1}^{d_Y} \PR(Z_{j*} \notin C^j_k) \nonumber\\
    &\leq \sum_{j=1}^{d_Y}\sum_{z_j\in \fact_j}\sum_{u=1}^{k-1} \PR\left(G^j_k(z_j) \geq \sqrt{\horizon u \log \numep}\right) \nonumber\\
    &\leq \PR(\bar{\mathcal{E}}) + \sum_{j=1}^{d_Y}\sum_{z_j\in \fact_j}\sum_{u=1}^{k-1}  \PR\left(G^j_k(z_j)\mathds{1}\{\mathcal{E}\} \geq\sqrt{\horizon u \log \numep}\right) \nonumber\\
    &\leq d_Y \horizon \overline{C}\numep^{-1} \label{eq: neg_event_local_bound}
\end{align}
We can now decompose the second term of~\eqref{eq:bayesian_regret} according to whether the sampled latent factorization is in $C_k$ or not, as in the previous section. Formally, we have
\begin{align*}
    \EV \left[\sum_{k=1}^\numep \EV_k \big[\overline{V}_k(\pi_k, Z_k) - V_*(\pi_k)\big]\right] &\leq  \EV \bigg[\sum_{k=1}^\numep  \left(\overline{V}_k(\pi_k, Z_k) - V_*(\pi_k)\bigg)\mathds{1}\{Z_k \in C_k\}\right] \\
    &\quad+ \horizon \sum_{k=1}^\numep \PR(Z_* \notin C_k)
\end{align*}
From \eqref{eq: neg_event_local_bound}, we know that the second term is upper bounded by $d_Y\overline{C}\horizon^2$. Meanwhile, we can bound the first term as follows
\begin{align*}
    &\EV \bigg[\sum_{k=1}^\numep  \left(\overline{V}_k(\pi_k, Z_k) - V_*(\pi_k)\bigg)\mathds{1}\{Z_k \in C_k\}\right]\\
    &\qquad \leq \EV\bigg[\sum_{k=1}^\numep \horizon \sum_{h=1}^\horizon \sum_{j=1}^{d_Y}\Big(\bar{r}_k(X_{k,h}[Z^k_j]) - R_*(X_{k,h}[Z_{j*}]) \\
    &\hspace{6cm}+ \| \bar{\tran}_k(X_{l,h}[Z^k_j]) - \tran_*(X_{l,h}[Z_{j*}]) \|_1\Big)\mathds{1}\{Z^k_j \in C^j_k\} \bigg] \\
    &\qquad= \horizon \sqrt{2}\EV\Bigg[\sum_{k=1}^\numep \sum_{h=1}^\horizon \sum_{j=1}^{d_Y} \beta_k(X_{k,h}[Z^k_j])\Bigg] + \EV\Bigg[\sum_{k=1}^\numep \horizon \sum_{h=1}^\horizon \sum_{j=1}^{d_Y}\big(\bar{r}_k(X_{k,h}[Z^k_j]) - R_*(X_{k,h}[Z_{j*}]) \\
    &\hspace{3cm} + \| \bar{\tran}_k(X_{l,h}[Z^k_j]) - \tran_*(X_{l,h}[Z_{j*}]) \|_1 - \sqrt{2}\beta_k(X_{k,h}[Z^k_j])\big)\mathds{1}\{Z^k_j \in C^j_k\} \Bigg]\\
    &\qquad \labelrel\leq{step:step_11} \horizon \sqrt{2}\EV\left[\sum_{k=1}^\numep \sum_{h=1}^\horizon \sum_{j=1}^{d_Y} \beta_k(X_{k,h}[Z^k_j])\right] + \EV\left[\sum_{j=1}^{d_Y}\sum_{z_j \in \fact_j} G^j_{\numep+1}(z_j) + d_Y\overline{C}\horizon \right]\\
    &\qquad \labelrel\leq{step:step_12} \horizon \sqrt{2}\EV\left[\sum_{k=1}^\numep \sum_{h=1}^\horizon \sum_{j=1}^{d_Y} \beta_k(X_{k,h}[Z^k_j])\right] + \sum_{j=1}^{d_Y}\sum_{z_j \in \fact_j} \frac{1}{d_Y}\sqrt{\horizon N^j_{\numep+1}(z_j) \log \numep} + d_Y\overline{C}\horizon\\
    &\qquad\leq \horizon \sqrt{2}\EV\left[\sum_{k=1}^\numep \sum_{h=1}^\horizon \sum_{j=1}^{d_Y} \beta_k(X_{k,h}[Z^k_j])\right] + \sqrt{\overline{C}\numep \horizon \log \numep} + d_Y\overline{C}\horizon
\end{align*}
where in step \eqref{step:step_11} we use the definition of $G^j_{K+1}(z)$ and in step \eqref{step:step_12} we upper bound the same quantity.

\paragraph{Bayesian regret for a product latent hypothesis space.}
\label{sec:int_regret_2}
Exploiting the $\sparseness$-sparseness assumption, we can write
\begin{align}
\label{eq: br_factored_latent}
    \mathcal{BR}(\numep) &= \tilde{O}\left(\horizon^2 d_Y N^{\sparseness} + \horizon^{\frac{5}{2}}d_YN^{1+\frac{\sparseness}{2}}\sqrt{\numep} + \sqrt{\overline{C}\numep \horizon} + d_Y\overline{C}\horizon^2 \right)
\end{align}
Notably, this rate is sublinear in the number of episodes $\numep$ and the number of latent local factorizations $\overline{C}$, exponential in the degree of sparseness $\sparseness$.

\subsection{Refinement 2: Degree of prior knowledge}
\label{subsection: prior_knowledge}
Finally, we aim to capture the dependency in the degree of prior knowledge $\eta$ in the Bayesian regret. To do that, we have to express $\overline{C} = \max_{j \in [d_Y]} |\Z_j|$ in terms of $\eta$. We can write
\begin{equation*}
    \overline{C} = \sum_{i=0}^\sparseness \binom{d_X}{i} = \sum_{i=0}^\sparseness C^{d_X}_i
\end{equation*}
where we count the empty factorization when $i = 0$. Given a graph hyper-prior that fixes $\eta < \sparseness$ edges for each node $j \in [d_Y]$, we can count the number of admissible local factorizations as
\begin{equation*}
    \overline{C} = \sum_{i=0}^{\sparseness-\eta} \binom{d_X - \eta}{i}
\end{equation*}
where we count the factorization with only the edges fixed a priori when $i = 0$.
We can build an upper bound on $\overline{C}$ as follows. 
\begin{align*}
    \overline{C} &= \sum_{i=0}^{\sparseness-\eta} \binom{d_X - \eta}{i}\\
    &\leq 2^{d_X-\eta-1}\exp\left(\frac{(d_X+\eta-2\sparseness-2)^2}{4(1+\sparseness-d_X)}\right)\\
    &\leq 2^{d_X-\eta}\exp\left(\frac{(d_X+\eta-2\sparseness)^2}{4(1+\sparseness-d_X)}\right) =: \phi(d_X, \sparseness, \eta)
\end{align*}
Since it is hard to interpret the rate of the latter upper bound, we derive a looser version that is easier to interpret. We have
\begin{align*}
    \overline{C} &= \sum_{i=0}^{\sparseness-\eta} \binom{d_X - \eta}{i}\\
    &= 2^{d_X-\eta} - \sum_{i=0}^{d_X-\sparseness-1}\binom{d_X - \eta}{i}\\
    &\leq 2^{d_X-\eta} - 2^{d_X-\sparseness} +1\\
    &\leq 2^{d_X-\eta}
\end{align*}
From the latter we can notice that each unit of the degree of prior knowledge $\eta$ make the hypothesis space shrink with an exponential rate, and thus the corresponding regret terms as well. In particular, by plugging-in the upper bound $\overline{C} \leq 2^{d_X-\eta}=\frac{2^{d_X}}{2^\eta}$ in the Bayesian regret in \eqref{eq: br_factored_latent}, we obtain the final upper bound, which is
\begin{align}
    \label{eq:br_factored_latent_edges}
    \mathcal{BR}(\numep) &= \tilde{O}\left(\horizon^2d_YN^{\sparseness} + \horizon^{\frac{5}{2}}d_YN^{1+\frac{\sparseness}{2}}\sqrt{\numep} + \sqrt{2^{d_X-\eta}\numep \horizon} + d_Y2^{d_X-\eta}\horizon^2 \right).
\end{align}

\subsection{High probability confidence widths}
\label{subsection: confidence_intervals}
Here we define high-probability confidence widths on the reward function and transition model along the lines of~\cite{hong2022thompson}, but with the difference that the confidence widths are defined for all factors and their possible assignments rather than for state-action pairs as in the tabular setting.
We denote as $c_k(x[\scopes_j])$ and $\phi_k(x[\scopes_j])$ the confidence widths for the $j$-th factor of the reward function and the transition model respectively. In the following, we indicate with $R_\fmdp$ and $\tran_\fmdp$ the mean reward and transition model of the FMDP $\fmdp$ respectively.

\paragraph{Reward function.}
First, we write the posterior mean reward for the $j$-th factor, given a factorization $\scopes$ as $\bar{r}_k(x[\scopes_j]) = \EV_{\fmdp \sim P_k(\cdot | \scopes)}[R_\fmdp(x[\scopes_j])]$. We wish to have a high probability bound of the type $$\PR_k\left(\left| R_\fmdp(x[\scopes_j]) - \bar{r}_k(x[\scopes_j])\right| \geq c_k(x[\scopes_j])\right) \leq \frac{1}{\numep}$$ for all $j \in [d_Y]$ and possible assignments $x[\scopes_j] \in \Omega$.
By the union bound, we have
\begin{align*}
    \PR_k\bigg(\big|R_\fmdp(x[\scopes_j]) &- \bar{r}_k(x[\scopes_j])\big| \geq c_k(x[\scopes_j])\bigg)\\
    &=\PR_k\left(\bigcup_{j=1}^{d_Y} \bigcup_{x[\scopes_j] \in \Omega}\bigg\{\left|R_\fmdp(x[\scopes_j]) - \bar{r}_k(x[\scopes_j])\right| \geq c_k(x[\scopes_j])\bigg\}\right)\\
    &\leq \sum_{j=1}^{d_Y} \sum_{x[\scopes_j] \in \Omega} \PR_k\bigg(\left|R_\fmdp(x[\scopes_j]) - \bar{r}_k(x[\scopes_j])\right| \geq c_k(x[\scopes_j])\bigg).
\end{align*}
Applying a union bound again to the latter expression, we can derive the following one-sided bound: $$\PR_k\bigg( R_\fmdp(x[\scopes_j]) - \bar{r}_k(x[\scopes_j]) \geq c_k(x[\scopes_j])\bigg) \leq \frac{1}{2\numep d_YN^{d_X}}.$$
According to Lemma~\ref{subgaussianity}, $ R_{\Delta} := R_\fmdp(x[\scopes_j]) - \bar{r}_k(x[\scopes_j])$ is a $\sigma^2$-subgaussian random variable with $\sigma^{2}=1 /\left(4\left(\|\alpha^R_{k}(\scopes_j)\|_{1}+1\right)\right)$. Therefore, through the Cramèr-Chernoff method exploited in Lemma \ref{subgaussianity_bound}, we have that the high probability bound above holds if
\begin{align*}
    \exp\left(-\frac{c_k(x[\scopes_j])^2}{2\sigma^2}\right) &\leq \frac{1}{2\numep d_YN^{d_X}}\\
\end{align*}
which holds if and only if
\begin{align}
    c_k(x[\scopes_j]) &\geq \sqrt{2\sigma^2 \log(2\numep d_YN^{d_X})} \nonumber\\
    &= \sqrt{\frac{ \log(2\numep d_YN^{d_X})}{2\left(\|\alpha^R_{k}(x[\scopes_j])\|_{1}+1\right)}} && \tag{plugged-in  $\sigma^2$ of $R_{\Delta}$} \label{eq:reward_conf_width}
\end{align}
Hence, we pick $c_k(x[\scopes_j]) := \sqrt{\frac{ \log(2\numep d_YN^\sparseness)}{2\left(\|\alpha^R_{k}(x[\scopes_j])\|_{1}+1\right)}}$, where $\sparseness$ is a lower bound on the value of $d_X$, which holds due to the $Z$-sparseness assumption.

\paragraph{Transition model.}
The derivation is analogous to the one for the reward function, hence here we report only the differences. First, we write the posterior mean transition probability for the $j$-th factor, given a factorization $\scopes$ as $\bar{\tran}_k(y[j] = n \mid x[z_j]) = \EV_{\fmdp \sim P_k(\cdot | z)}\left[\tran_\fmdp(y[j] = n \mid x[z_j])\right]$, which is the probability, according to the posterior at time $k$, over the element $n \in [N]$ of the domain of the $j$-th component of $Y$. Since we want to bound the deviations over all components of a factor, we define the vector form of the previous expression as $\bar{\tran}_k(x[\scopes_j]) = \left(\bar{\tran}_k(y[j] = n \mid x[z_j])\right)_{n \in [N]}$.
By following the same steps as for the confidence width of the reward function, we get
\begin{equation}
    \phi_k(x[\scopes_j]) := \sqrt{\frac{ N\log(2\numep d_YN^\sparseness)}{2\left(\|\alpha^P_{k}(x[\scopes_j])\|_{1}+1\right)}} \label{eq: trans_conf_width}
\end{equation}
where the $\sqrt{N}$ term is due to \citep[Theorem 5.4.c]{lattimore2020bandit}, since the $l^1$ norm sums over $N\sigma^2$-subgaussian random variables and therefore the induced random variable is $(N\sigma^2)$-subgaussian.

\subsection{Auxiliary lemmas}
\label{subsection:auxiliary_lemmas}

\begin{lemma}[Theorem 1 and 3 of \cite{marchal2017subgaussian}]
\label{subgaussianity}
    Let $X \sim \operatorname{Beta}(\alpha, \beta)$ for $\alpha, \beta>0$. Then $X-\mathbb{E}[X]$ is $\sigma^{2}$-subgaussian with $\sigma^{2}=1 /(4(\alpha+\beta+1))$. Similarly, let $X \sim \operatorname{Dir}(\alpha)$ for $\alpha \in \mathbb{R}_{+}^{d}$. Then $X-\mathbb{E}[X]$ is $\sigma^{2}$-subgaussian with $\sigma^{2}=1 /\left(4\left(\|\alpha\|_{1}+1\right)\right)$.
\end{lemma}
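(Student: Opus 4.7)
The plan is to establish subgaussianity through the classical moment generating function characterization. For a centered scalar random variable $Y := X - \mathbb{E}[X]$, define $\psi(\lambda) := \log \mathbb{E}[e^{\lambda Y}]$. Then $Y$ is $\sigma^{2}$-subgaussian precisely when $\psi(\lambda) \le \lambda^{2} \sigma^{2} / 2$ for every $\lambda \in \mathbb{R}$. Because $\psi(0) = 0$ and $\psi'(0) = \mathbb{E}[Y] = 0$, a second-order Taylor expansion with remainder reduces the task to the uniform pointwise bound $\psi''(\lambda) \le \sigma^{2}$ on the whole real line, where $\psi''(\lambda) = \mathrm{Var}_{Q_{\lambda}}(X)$ is the variance of $X$ under the exponentially tilted measure $dQ_{\lambda} \propto e^{\lambda x}\, dP(x)$.

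For the Beta part, I would exploit the identity $\mathbb{E}[e^{\lambda X}] = M(\alpha,\, \alpha+\beta,\, \lambda)$, where $M$ is Kummer's confluent hypergeometric function, together with the derivative rule $\partial_{\lambda} M(a, b, \lambda) = (a/b)\, M(a+1, b+1, \lambda)$. These produce a closed form for $V(\lambda) := \mathrm{Var}_{Q_{\lambda}}(X)$ as a rational combination of contiguous Kummer functions. The key technical step, and the main obstacle of the whole argument, is proving that $V$ attains its global maximum at $\lambda = 0$, equivalently that $\mathrm{sign}(V'(\lambda)) = -\mathrm{sign}(\lambda)$. I would establish this via Kummer's contiguous relations, reducing the claim to an algebraic inequality that can be checked through Euler's integral representation $M(a, b, \lambda) = \frac{\Gamma(b)}{\Gamma(a)\Gamma(b-a)} \int_{0}^{1} e^{\lambda x}\, x^{a-1}(1-x)^{b-a-1}\, dx$; an alternative route is a direct FKG/Chebyshev-type correlation argument for the tilted measure, using that $x \mapsto x$ and $x \mapsto e^{\lambda x}$ are monotone on $[0, 1]$. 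Once $V(\lambda) \le V(0)$ is secured, a direct computation gives $V(0) = \alpha \beta / [(\alpha+\beta)^{2}(\alpha+\beta+1)]$, and the AM-GM inequality $\alpha \beta \le (\alpha+\beta)^{2}/4$ yields the variance proxy $\sigma^{2} = 1/[4(\alpha+\beta+1)]$.

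For the Dirichlet part, I would proceed analogously through the multivariate MGF of $X \sim \mathrm{Dir}(\alpha)$, which admits an integral representation over the simplex and whose tilt by $e^{\lambda \langle u, x\rangle}$ remains supported on the simplex. Applying the scalar MGF strategy to $\langle u, X\rangle$ for arbitrary direction $u$, the monotonicity of the tilted variance in $\lambda$ follows from an adaptation of the same Kummer-function contiguity argument to the generalized hypergeometric functions that describe the Dirichlet MGF. A cleaner and sufficient route, when only coordinate-wise subgaussianity is needed (which is the way the lemma is invoked in Section~\ref{subsection: confidence_intervals}), is reduction to the Beta result: each marginal $X_{i}$ of $\mathrm{Dir}(\alpha)$ is $\mathrm{Beta}(\alpha_{i},\, \|\alpha\|_{1} - \alpha_{i})$, so the first part of the lemma immediately yields $\sigma^{2} = 1/[4(\alpha_{i} + (\|\alpha\|_{1} - \alpha_{i}) + 1)] = 1/[4(\|\alpha\|_{1} + 1)]$, completing the argument.
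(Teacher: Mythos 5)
The paper offers no proof of this lemma: it is imported verbatim as an external result from \cite{marchal2017subgaussian}, so your proposal is measured against that reference rather than against anything in the appendix. Your overall strategy (reduce $\sigma^2$-subgaussianity to a uniform bound on $\psi''(\lambda)=\mathrm{Var}_{Q_\lambda}(X)$ and integrate twice) is the right one, and your reduction of the Dirichlet statement to the Beta marginals $X_i\sim\mathrm{Beta}(\alpha_i,\|\alpha\|_1-\alpha_i)$ is correct and sufficient for the coordinate-wise way the lemma is invoked in Section~\ref{subsection: confidence_intervals}.

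However, the key technical step you propose for the Beta part is false. You claim that $V(\lambda):=\mathrm{Var}_{Q_\lambda}(X)$ attains its global maximum at $\lambda=0$, equivalently that $\mathrm{sign}(V'(\lambda))=-\mathrm{sign}(\lambda)$. But $V'(0)$ is the third cumulant of $X$, i.e.\ the third central moment of $\mathrm{Beta}(\alpha,\beta)$, which is nonzero whenever $\alpha\neq\beta$; so $\lambda=0$ is not even a critical point of $V$ in the asymmetric case. More structurally, if $V(\lambda)\le V(0)$ held for all $\lambda$, integrating twice would give $\psi(\lambda)\le\lambda^2\,\mathrm{Var}(X)/2$, i.e.\ strict sub-Gaussianity with proxy variance equal to the variance --- and \cite{marchal2017subgaussian} show (in their characterization of the \emph{optimal} proxy variance) that this holds only when $\alpha=\beta$. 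The FKG/Chebyshev route does not rescue the argument: positive association of monotone functions controls covariances such as $\mathrm{Cov}_{Q_\lambda}(X,e^{\mu X})$, not the sign of the tilted third central moment $V'(\lambda)=\mathbb{E}_{Q_\lambda}[(X-m_\lambda)^3]$. The inequality that actually closes the argument is $V(\lambda)\le m_\lambda(1-m_\lambda)/(\alpha+\beta+1)\le 1/(4(\alpha+\beta+1))$, where $m_\lambda=\mathbb{E}_{Q_\lambda}[X]$ is the tilted mean: one compares the tilted variance to the Beta-type variance evaluated at the \emph{tilted} mean rather than at $\lambda=0$, which is exactly where the confluent hypergeometric ODE and contiguous relations you mention are needed; your AM--GM step then applies to $m_\lambda(1-m_\lambda)\le 1/4$ uniformly in $\lambda$. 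As written, your proof has a gap at its central claim.
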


\begin{lemma}[Theorem 5.3 of \cite{lattimore2020bandit}]
\label{subgaussianity_bound}
 If $X$ is $\sigma^2$-subgaussian, then for any $\varepsilon \geq 0$,
\begin{equation*}
\mathbb{P}(X \geq \varepsilon) \leq \exp \left(-\frac{\varepsilon^{2}}{2 \sigma^{2}}\right)
\end{equation*}
\end{lemma}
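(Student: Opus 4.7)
The plan is to apply the standard Cramér--Chernoff (exponential Markov) argument, which is the textbook route for obtaining one-sided subgaussian tail bounds. First I would unpack the implicit definition of $\sigma^2$-subgaussianity used by the paper: a (centered) random variable $X$ is $\sigma^2$-subgaussian if its moment generating function satisfies
\[
\mathbb{E}[\exp(\lambda X)] \leq \exp\!\left(\tfrac{\lambda^2 \sigma^2}{2}\right) \qquad \text{for every } \lambda \geq 0.
\]
This is consistent with the conclusion of the preceding Lemma~\ref{subgaussianity}, which guarantees subgaussianity of $X-\mathbb{E}[X]$ with the relevant variance proxy.

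Second, fix $\varepsilon \geq 0$ and choose an arbitrary $\lambda > 0$. Since $t \mapsto e^{\lambda t}$ is nonnegative and strictly increasing, the event $\{X \geq \varepsilon\}$ coincides with $\{e^{\lambda X} \geq e^{\lambda \varepsilon}\}$, so Markov's inequality combined with the subgaussian MGF bound yields
\[
\mathbb{P}(X \geq \varepsilon) \leq e^{-\lambda \varepsilon}\,\mathbb{E}[e^{\lambda X}] \leq \exp\!\left(\tfrac{\lambda^2 \sigma^2}{2} - \lambda \varepsilon\right).
\]

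Third, optimize the right-hand side over $\lambda > 0$. The exponent is a convex quadratic in $\lambda$ with unique minimizer $\lambda^\star = \varepsilon/\sigma^2$, which is admissible because $\varepsilon \geq 0$. Substituting $\lambda^\star$ collapses the exponent to $-\varepsilon^2/(2\sigma^2)$, delivering exactly the claimed inequality. The boundary case $\varepsilon = 0$ is covered trivially by $\mathbb{P}(X \geq 0) \leq 1 = e^0$.

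There is no real obstacle here; the only care needed is to ensure that $\lambda^\star \geq 0$ so that the Markov step is valid under the paper's one-sided convention, and to note that if the subgaussian condition is stated on all of $\mathbb{R}$ rather than only on $\lambda \geq 0$, the argument above is unchanged because only the nonnegative half-line is invoked.
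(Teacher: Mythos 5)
Your proof is correct and follows exactly the standard Cram\'er--Chernoff argument; the paper does not prove this lemma itself but cites it as Theorem 5.3 of \cite{lattimore2020bandit}, whose proof is precisely the Markov-inequality-plus-MGF-optimization route you describe (and the paper explicitly refers to this step as ``the Cram\`er-Chernoff method'' when invoking the lemma). No gaps.
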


\begin{lemma}[Value Difference Lemma, Lemma 6 \cite{hong2022thompson}]
\label{value_diff_lemma}
    For any MDPs $\mdp^{\prime}, \mdp$, and policy $\pi$,
    \begin{equation*}
    V_{\mdp^{\prime}}(\pi)-V_{\mdp}(\pi) \leq \EV_{\mdp}\left[\sum_{h=1}^{\horizon} R_{\mdp^{\prime}}\left(S_{h}, A_{h}\right)-R_{\mdp}\left(S_{h}, A_{h}\right) + \horizon \left\|\tran_{\mdp^{\prime}}\left(S_{h}, A_{h}\right)-\tran_{\mdp}\left(S_{h}, A_{h}\right)\right\|_{1}\right]
    \end{equation*}
\end{lemma}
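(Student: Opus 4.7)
\textbf{Proof plan for Lemma~\ref{value_diff_lemma} (Value Difference Lemma).}

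The plan is to expand both value functions one step at a time using the Bellman equation and telescope the resulting recursion, ending with a trajectory-level expectation under the dynamics of $\mdp$. Let $V^{\pi,\mdp}_h(s)$ and $V^{\pi,\mdp'}_h(s)$ denote the step-$h$ value functions of $\pi$ under $\mdp$ and $\mdp'$ respectively, so that $V_\mdp(\pi)=\EV_{s\sim\isd}[V^{\pi,\mdp}_1(s)]$ and similarly for $\mdp'$. I would prove by backward induction on $h$ the stronger pointwise statement
\begin{equation*}
V^{\pi,\mdp'}_h(s)-V^{\pi,\mdp}_h(s)\le \EV_{\pi,\mdp}\!\left[\sum_{h'=h}^{\horizon} \bigl(R_{\mdp'}(s_{h'},a_{h'})-R_{\mdp}(s_{h'},a_{h'})\bigr)+\horizon\,\|\tran_{\mdp'}(s_{h'},a_{h'})-\tran_{\mdp}(s_{h'},a_{h'})\|_1 \,\Big|\, s_h=s\right],
\end{equation*}
which at $h=1$ combined with expectation over $s_1\sim\isd$ yields the claim.

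The key algebraic step is to apply Bellman's equation to each side and add-subtract the mixed term $\EV_{a\sim\pi_h(\cdot\mid s)}\EV_{s'\sim\tran_{\mdp}(\cdot\mid s,a)}[V^{\pi,\mdp'}_{h+1}(s')]$, which rewrites the difference as three pieces: a local reward gap $R_{\mdp'}-R_{\mdp}$, a ``transition mismatch'' term $\EV_{a\sim\pi_h}\bigl\langle \tran_{\mdp'}(\cdot\mid s,a)-\tran_{\mdp}(\cdot\mid s,a),\,V^{\pi,\mdp'}_{h+1}\bigr\rangle$, and a recursive ``future gap'' term $\EV_{a\sim\pi_h,\,s'\sim\tran_{\mdp}(\cdot\mid s,a)}\bigl[V^{\pi,\mdp'}_{h+1}(s')-V^{\pi,\mdp}_{h+1}(s')\bigr]$. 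Crucially, the recursive piece is evaluated under the transitions of $\mdp$, which is exactly what produces trajectories sampled according to $\pi$ in $\mdp$ in the final bound.

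The transition mismatch term is controlled by a standard H\"older-type inequality: since rewards lie in $[0,1]$ and there are at most $\horizon$ remaining steps, $\|V^{\pi,\mdp'}_{h+1}\|_\infty\le \horizon$, whence $\bigl\langle \tran_{\mdp'}-\tran_{\mdp},\,V^{\pi,\mdp'}_{h+1}\bigr\rangle\le \horizon\,\|\tran_{\mdp'}(\cdot\mid s,a)-\tran_{\mdp}(\cdot\mid s,a)\|_1$. Substituting this upper bound and invoking the inductive hypothesis on the future-gap term gives the inductive step; the base case $h=\horizon+1$ is trivial since both value functions are zero. Telescoping the recursion over $h=1,\ldots,\horizon$ under the trajectory distribution induced by $\pi$ in $\mdp$ delivers the stated inequality.

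The only subtle point is bookkeeping which MDP's distribution the outer expectation is taken over: the asymmetric choice to add and subtract using $\tran_{\mdp}$ (rather than $\tran_{\mdp'}$) is what makes $\EV_{\mdp}$ appear on the right-hand side, and it is also why the inequality is not an equality -- we necessarily absorb the signed inner product $\langle \tran_{\mdp'}-\tran_{\mdp},V^{\pi,\mdp'}_{h+1}\rangle$ into a one-sided bound of magnitude $\horizon\|\tran_{\mdp'}-\tran_{\mdp}\|_1$. No concentration or probabilistic argument is required; the lemma is a deterministic identity plus one H\"older bound, so the proof is short once the correct add-subtract decomposition is in place.
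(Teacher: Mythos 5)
Your proposal is correct and is the standard simulation-lemma argument: backward induction on the Bellman recursion, an add--subtract of the mixed term evaluated under $\tran_{\mdp}$, and a H\"older bound $\langle \tran_{\mdp'}-\tran_{\mdp}, V^{\pi,\mdp'}_{h+1}\rangle \le \horizon\|\tran_{\mdp'}-\tran_{\mdp}\|_1$ using $\|V^{\pi,\mdp'}_{h+1}\|_\infty\le\horizon$. The paper does not write this derivation out --- it simply cites Lemma C.1 of the reward-free exploration paper and notes that the value function is bounded by $\horizon$ --- so your proof is a self-contained reconstruction of exactly the argument the paper delegates to that citation.
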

\begin{proof}
    This upper bound can be obtained by trivially upper bounding with 1 the reward at each step, and therefore with $h$ the value function within the statement in \citep[Lemma C.1]{chi2020rewardfree}.
\end{proof}

\begin{lemma}[Deviations of Factored Reward and Transitions \cite{osband2014fmdp}]
    \label{triangle_inequality_bound}
    Given two reward functions R and $\bar{R}$ with scopes $\left\{\scopes_j\right\}_{j=1}^{d_Y}$ we can upper bound the deviations by
    \begin{equation*}
        |R(x) - \bar{R}(x)| \leq \sum_{j=1}^{d_Y} |R_j(x[\scopes_j]) - \bar{R}_j(x[\scopes_j])|
    \end{equation*}
    and, given two transition models $\tran$ and $\bar{\tran}$ with scopes $\left\{\scopes_{j}\right\}_{j=1}^{d_Y}$ we can upper bound the deviations by
    \begin{equation*}
        |\tran(x) - \bar{\tran}(x)| \leq \sum_{j=1}^{d_Y} |\tran_j(x[\scopes_j]) - \bar{\tran}_j(x[\scopes_j])|
    \end{equation*}
\end{lemma}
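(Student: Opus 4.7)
The plan is to treat the two inequalities in parallel but by different structural mechanisms: the reward version is additive and follows directly from the triangle inequality, while the transition version is multiplicative and requires a short hybridization argument.

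For the factored reward, the definition $R(x) = \sum_{j=1}^{d_Y} R_j(x[\scopes_j])$ together with the analogous expression for $\bar R$ gives
\[
R(x) - \bar R(x) = \sum_{j=1}^{d_Y} \bigl( R_j(x[\scopes_j]) - \bar R_j(x[\scopes_j]) \bigr),
\]
and the claim follows from the triangle inequality applied to a $d_Y$-fold sum. Nothing more is needed.

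For the factored transition, interpreting the inequality pointwise at a fixed $y \in \rightnodes$ (which is what is needed in step (3) of the first-term bound in the regret decomposition, and which also implies the $\ell_1$ form used there), I set $a_j := \tran_j(y[j]\mid x[\scopes_j])$ and $b_j := \bar\tran_j(y[j]\mid x[\scopes_j])$ and invoke the standard product-hybridization identity
\[
\prod_{j=1}^{d_Y} a_j \;-\; \prod_{j=1}^{d_Y} b_j \;=\; \sum_{k=1}^{d_Y} \Bigl(\prod_{j<k} b_j\Bigr)\, (a_k - b_k)\, \Bigl(\prod_{j>k} a_j\Bigr).
\]
Because each $a_j, b_j$ is a conditional probability we have $a_j, b_j \in [0,1]$, so every cross-factor in the $k$-th summand is bounded by one. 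Taking absolute values and applying the triangle inequality over $k$ yields $|\tran(y\mid x) - \bar\tran(y\mid x)| \le \sum_{k=1}^{d_Y} |a_k - b_k|$, which is the stated bound. To upgrade this to the $\ell_1$ form $\|\tran(\cdot\mid x) - \bar\tran(\cdot\mid x)\|_1 \le \sum_j \|\tran_j(\cdot\mid x[\scopes_j]) - \bar\tran_j(\cdot\mid x[\scopes_j])\|_1$, I would sum the telescoped identity in absolute value over $y \in \rightnodes$, exchange sums with products (Fubini), and use the fact that each factor in the $k$-th summand depends on a single coordinate $y[j]$ and normalizes to one when summed out, so all cross-products marginalize away and the $k$-th term contributes exactly $\|\tran_k(\cdot\mid x[\scopes_k]) - \bar\tran_k(\cdot\mid x[\scopes_k])\|_1$.

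The only delicate point is picking a consistent telescoping order (here, $\bar\tran$ on the lower indices and $\tran$ on the upper indices) so that bounding the cross-factors by one is immediate; beyond this, the argument is routine and exploits nothing about the causal structure beyond the bare factorization. I therefore do not expect any genuinely hard step.
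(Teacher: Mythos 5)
Your proof is correct, and it follows essentially the same route the paper relies on: the lemma is stated without proof in the appendix and imported from \cite{osband2014fmdp}, where the reward part is exactly the triangle inequality on the additive factorization and the transition part is exactly the product-hybridization (telescoping) identity with all factors in $[0,1]$, upgraded to the $\ell_1$ form by marginalizing out the cross-factors. Your handling of both the pointwise and $\ell_1$ versions, including the observation that the partial products sum to one coordinate-wise, matches that standard derivation with no gaps.
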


\begin{lemma}
    \label{bound_confidence_widths}
    For episode $k$ and latent factorization $z \in \mathcal{Z}$, let $\beta_k(x[z_j]) = c_k(x[z_j]) + \phi_k(x[z_j])$ for any $j \in [d_Y]$ and  $x[z_j] \in \Omega$. Let $\Lambda_{0,z} = \min\left\{\min_{j, x[\scopes_j]} \|\alpha^R_{0}(x[\scopes_j])\|_1, \min_{j, x[\scopes_j]} \|\alpha^P_{0}(x[\scopes_j])\|_1\right\}$ indicates the minimum level of concentration between the reward function and transition model priors for any factor and latent factorization $z$. Then, we have
    \begin{equation*}
        \sum_{k=1}^\numep \sum_{h=1}^\horizon\sum_{j=1}^{d_Y}\beta_k(X_{k,h}[z_j]) \leq \horizon d_Y N^{d_X} + d_Y N\horizon \sqrt{N^{d_X}\numep\horizon\log(4\numep d_YN^{d_X})\log\left(1+ \frac{\numep\horizon}{2N^{d_X}\Lambda_{0,z}}\right)}
    \end{equation*}
    \begin{proof}
    We define $N_k(x[z_j]) = \sum_{l=1}^{k-1} \sum_{h=1}^\horizon \mathds{1}\{X_{l,h}[z_j]=x[z_j]\}$ as the number of times the assignment $x[z_j]$ was sampled up to episode $k$ for factor $j$. We can decompose the sum as
    \begin{align*}
        \sum_{k=1}^\numep& \sum_{h=1}^\horizon \sum_{j=1}^{d_Y}\beta_k(X_{k,h}[z_j]) \\
        &\leq  \sum_{k=1}^\numep \sum_{h=1}^\horizon \sum_{j=1}^{d_Y} \mathds{1}\{N_k(X_{k,h}[z_j]) \leq h\} + \sum_{k=1}^\numep \sum_{h=1}^\horizon \sum_{j=1}^{d_Y} \mathds{1}\{N_k(X_{k,h}[Z_j]) > \horizon\}\beta_k(X_{k,h}[z_j])
    \end{align*}
    where we upper bound by $1$ the regret in a step due to one factor. Therefore, the first term is upper bounded by $\horizon d_YN^{d_X}$ since there are at most $N^{d_X}$ assignments for each $j \in [d_Y]$ and the same one can appear in the sum at most $\horizon$ times, thus removing the dependency on $\numep$.  Due to the assumption of $\sparseness$-sparseness, we will later use the bound $\horizon d_YN^{\sparseness}$. As for the second term, we define $N_{k,h}(x[z_j]) = N_k(x[z_j]) + \sum_{p=1}^{h-1} \mathds{1}\{X_{k,p}[z_j] = x[z_j]\}$ as the number of times $x[z_j]$ was sampled up to step $h$ of episode $k$, for factor $j$. We split $\beta_k$ into $c_k$ and $\phi_k$. For $c_k$ we have: 
    \begin{align*}
        &\sum_{k=1}^\numep \sum_{h=1}^\horizon \sum_{j=1}^{d_Y} \mathds{1}\{N_k(X_{k,h}[z_j]) > \horizon\} c_k(X_{k,h}[z_j])\\
        &= \sum_{k=1}^\numep \sum_{h=1}^\horizon \sum_{j=1}^{d_Y} \mathds{1}\{N_k(X_{k,h}[z_j]) > \horizon\} \sqrt{\frac{ \log(2\numep d_YN^\sparseness)}{2\left(\|\alpha^R_{k}(X_{k,h}[z_j])\|_{1}+1\right)}} \\
        &= \sum_{k=1}^\numep \sum_{h=1}^\horizon\sum_{j=1}^{d_Y} \sum_{x[z_j] \in \Omega} \mathds{1}\{N_k(x[z_j]) > \horizon\} \sqrt{\frac{ \log(2\numep d_YN^\sparseness)}{2\|2\alpha^R_{k}(X_{k,h}[z_j])\|_{1} + 2N_k(x[z_j])+2}}\\
        &\leq  \sum_{k=1}^\numep \sum_{h=1}^\horizon \sum_{j=1}^{d_Y} \sum_{x[z_j] \in \Omega} \sqrt{\frac{ \log(2\numep d_YN^\sparseness)}{2\|2\alpha^R_{k}(X_{k,h}[z_j])\|_{1} + N_{k,h}(x[z_j])}}\\
        &\leq \sqrt{\log(2\numep d_YN^\sparseness)} \sum_{j=1}^{d_Y} \sum_{x[z_j] \in \Omega} \sqrt{N_{\numep+1}(x[z_j])\sum_{u=1}^{N_{\numep+1}(x[z_j])}\frac{1}{2\|\alpha^R_{k}(X_{k,h}[z_j])\|_{1} + u}}\\
        &\leq  \sqrt{N^{d_X}\numep \horizon \log(2\numep d_YN^\sparseness)} \sum_{j=1}^{d_Y} \sqrt{\sum_{u=1}^{\nicefrac{\numep \horizon}{N^{d_X}}}\frac{1}{2\Lambda_{0,z}+u}}\\
        &\leq d_Y \sqrt{N^{d_X}\numep \horizon\log\left(2\numep d_YN^\sparseness\right)\log\left(1 + \frac{\numep \horizon}{2N^{d_X}\Lambda_{0,z}}\right)}
    \end{align*}
    where in the first step we have plugged-in $c_k$ as picked in Section 4, in the second step have exploited the posterior update rule, in step three we have used that if $N_k(x[z_j]) > \horizon$ we have that $N_{k,h}(x[z_j]) \leq N_k(x[z_j]) + \horizon \leq 2N_k(x[z_j])$, and in the remaining passages we have used known bounds as in \citep[Lemma 6]{hong2022thompson}.
    Analogously, for $\phi_k$ we can derive the following.
    \begin{align*}
        \sum_{k=1}^\numep \sum_{h=1}^\horizon \sum_{j=1}^{d_Y} \mathds{1}\{N_k(X_{k,h}[z_j]) > \horizon\} &\phi_k(X_{k,h}[z_j])\\
        &\leq d_YN\horizon \sqrt{N^{d_X}\numep \horizon \log\left(4\numep d_YN^\sparseness\right) \log\left(1 + \frac{\numep \horizon}{2N^{d_X}\Lambda_{0,z}}\right)}
    \end{align*}
    Notice that again, due to the $\sparseness$-sparseness, we can replace in the steps above $N^{d_X}$ with $N^\sparseness$.
    Combining the terms, we have:
    \begin{equation*}
        \sum_{k=1}^\numep \sum_{h=1}^\horizon \sum_{j=1}^{d_Y}\beta_k(X_{k,h}[z_j]) \leq \horizon d_YN^{d_X} + 2d_YN\horizon \sqrt{N^{d_X}\numep\horizon \log\left(4\numep d_YN^\sparseness\right) \log\left(1 + \frac{\numep \horizon}{2N^{d_X}\Lambda_{0,z}}\right)}
    \end{equation*}
    \end{proof}
\end{lemma}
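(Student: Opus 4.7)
The strategy is a standard low-count/high-count decomposition combined with a Dirichlet-posterior potential argument. Define $N_k(x[z_j]) = \sum_{l < k}\sum_{h}\mathds{1}\{X_{l,h}[z_j] = x[z_j]\}$ and split
\begin{equation*}
\sum_{k,h,j}\beta_k(X_{k,h}[z_j]) \le \sum_{k,h,j}\mathds{1}\{N_k(X_{k,h}[z_j]) \le H\} + \sum_{k,h,j}\mathds{1}\{N_k(X_{k,h}[z_j]) > H\}\beta_k(X_{k,h}[z_j]),
\end{equation*}
using the trivial bound $\beta_k \le 1$ for the low-count indicator. The low-count piece is handled by pigeonhole: for each factor $j$ and each of the at most $N^{d_X}$ distinct assignments $x[z_j]$, the event $N_k(x[z_j]) \le H$ can hold on at most $H$ subsequent visits, yielding a contribution of $H d_Y N^{d_X}$.

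For the high-count piece I exploit the Dirichlet posterior update identity $\|\alpha^R_k(x[z_j])\|_1 = \|\alpha^R_0(x[z_j])\|_1 + N_k(x[z_j])$ (and similarly for $\alpha^P$). Substituting the closed forms of $c_k$ and $\phi_k$ from Section~\ref{subsection: confidence_intervals} produces sums of terms $1/\sqrt{a + N_{k,h}(x[z_j])}$, where $N_{k,h}$ is the step-indexed counter that includes visits within the current episode up to step $h$. The purpose of the threshold $N_k > H$ is precisely that then $N_{k,h} \le N_k + H \le 2 N_k$, so $N_k$ and $N_{k,h}$ are interchangeable up to a constant.

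Next I apply Cauchy--Schwarz twice. First, within each pair $(j, x[z_j])$,
\begin{equation*}
\sum_{k,h} \frac{\mathds{1}\{X_{k,h}[z_j] = x[z_j]\}}{\sqrt{a + N_{k,h}(x[z_j])}} \le \sqrt{N_{K+1}(x[z_j])} \cdot \sqrt{\sum_{u=1}^{N_{K+1}(x[z_j])} \frac{1}{a + u}}.
\end{equation*}
Then, summing over assignments and using $\sum_{x[z_j]} N_{K+1}(x[z_j]) \le K H$ together with the count $N^{d_X}$ of assignments, a second Cauchy--Schwarz extracts a factor $\sqrt{N^{d_X} K H}$. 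Applying the harmonic-sum bound $\sum_{u=1}^T 1/(c + u) \le \log(1 + T/c)$ with $c = \Lambda_{0,z}$ produces the $\log(1 + K H / (2 N^{d_X}\Lambda_{0,z}))$ factor. Summing the reward contribution (from $c_k$) and the transition contribution (from $\phi_k$, which carries an extra $\sqrt{N}$ factor coming from the $\ell_1$ deviation of an $N$-dimensional Dirichlet vector) yields the claimed bound.

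The main obstacle is the delicate counter bookkeeping: switching from the episode-indexed $N_k$ that appears naturally in the Dirichlet posterior to the step-indexed $N_{k,h}$ needed to apply Cauchy--Schwarz to the telescoped sum, without losing more than a constant factor. A secondary subtlety is tracking the $\sqrt{N}$ factor from the vector $\ell_1$ confidence width through both applications of Cauchy--Schwarz; this is precisely what produces the $d_Y N H$ prefactor on the second term of the bound rather than $d_Y H$ alone.
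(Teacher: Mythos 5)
Your proposal follows essentially the same route as the paper's proof: the same low-count/high-count split at threshold $N_k > H$ with the trivial bound $\beta_k \le 1$ and pigeonhole giving $H d_Y N^{d_X}$, the same use of the Dirichlet posterior update to express $\|\alpha_k\|_1$ through the visit counts, the same $N_{k,h} \le 2N_k$ bookkeeping, and the same double Cauchy--Schwarz plus harmonic-sum bound yielding the $\sqrt{N^{d_X} K H}$ and logarithmic factors, with the extra $\sqrt{N}$ from the $\ell_1$ transition width tracked identically. No substantive differences to report.
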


\newpage

\section{Additional experiment with varying prior knowledge}
\label{sec:apx_experiments}
In Section~\ref{sec:experiments}, we reported results for C-PSRL with a fixed degree of prior knowledge $\eta = 2$. It is interesting to see how changing the degree of prior knowledge affects the regret of C-PSRL.
To this end, in Figure~\ref{fig:apx_experiments}, we report an additional experiment in the same setting of Figures~\ref{fig:random_regret},~\ref{fig:random_error} where we compare C-PSRL with $\eta \in \{1, 2, 3, 4\}$ against F-PSRL, which corresponds to C-PSRL with $\eta = 5$, meaning the full graph is provided as an input to the algorithm. Note that we omit PSRL from the plot here to get a clearer view of the regret curves.

Perhaps surprisingly, varying the prior causal knowledge does not affect the regret of C-PSRL in a significant way. This may look discordant with the result in Theorem~\ref{thr:bayesian_regret_theorem}. However, we note that this domain is extremely small, so that the regret rate is actually dominated by the first term. Investigating the fine-grained impact of $\eta$ in the regret of C-PSRL in larger domains (especially when $d_X \gg Z$) would be a nice corroboration of our theoretical analysis, which we leave as future work.

\begin{figure*}[h]
    \centering
	\begin{subfigure}[t]{\textwidth}
		\centering
		\includegraphics[scale=1]{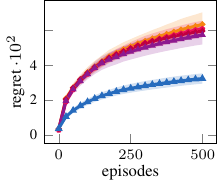}
	\end{subfigure}

    \begin{subfigure}[t]{\textwidth}
    \centering
    \vspace{-8pt}
    \includegraphics{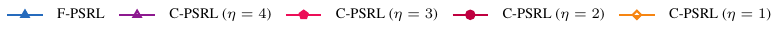}
    \end{subfigure}

    \vspace{-5pt}
	\caption{Regret and model error as a function of the episodes in the Random FMDP domain with $d_X = 9, d_Y = 6, Z = 5, N = 2, H = 100$. The plots report the mean and 95\% c.i. over 20 runs.}
    \vspace{-0.4cm}
	\label{fig:apx_experiments}
\end{figure*}
\newpage

\section{Note on confounding}
In this paper, we assume there is no confounding acting on the variables of a causal graph $\G = (\mathcal{X}, \mathcal{Y}, z)$. While this assumption brings the proposed problem formulation closer to the literature on FMDPs~\citep{osband2014fmdp, xu2020reinforcement, tian2020towards, chen2020efficient, talebi2020improved, rosenberg2021oracle}, we believe that extending our analysis to include confounding would further narrow the gap between our formulation and real-world applications, which may admit confounding.
Here, we report a few notes on how confounding affects our results and the additional challenges it brings, while we leave as future work a formal study on confounding and how to deal with it.

Let us waive the no confounding assumption and admit the presence of a set of \emph{unobserved} random variables $\mathcal{C} = \{ C_j \}_{j = 1}^{d_C}$ taking values $c_j \in [N]$. In general, we can identify three types of confounding according to how the unobserved variables $\mathcal{C}$ interact with the observed variables $\mathcal{X}, \mathcal{Y}$:
\begin{itemize}[leftmargin=20pt]
    \item[(1)] Confounding on $\mathcal{X}$, i.e., there are causal edges of the form $X_i \leftarrow C_j \rightarrow X_k$ for some $j \in [d_C]$ and $i, k \in [d_X]$. This can be equivalently modeled through bi-directed edges in $\mathcal{X} \times \mathcal{X}$;
    \item[(2)] Confounding between $\mathcal{X}$ and $\mathcal{Y}$, i.e., there are causal edges of the form $X_i \leftarrow C_j \rightarrow Y_k$ for some $i \in [d_X]$, $j \in [d_C]$, and  $k \in [d_Y]$. This can be equivalently modeled through bi-directed edges in $\mathcal{X} \times \mathcal{Y}$;
    \item[(3)] Confounding on $\mathcal{Y}$, i.e., there are causal edges of the form $Y_i \leftarrow C_j \rightarrow Y_k$ for some $j \in [d_C]$ and $i, k \in [d_Y]$. This can be equivalently modeled through bi-directed edges in $\mathcal{Y} \times \mathcal{Y}$.
\end{itemize}

The type (1) can be easily incorporated in our analysis, as the confounding does not affect transitions in any meaningful way. Specifically, our regret result (Theorem~\ref{thr:bayesian_regret_theorem}) would stand without changes, whereas the causal discovery result (Corollary~\ref{cor:causal-identification}) would be slightly weakened. In particular, we could still identify the edges in $\mathcal{X} \times \mathcal{Y}$ since we observe all parents of each node in $\mathcal{Y}$, however, the edges in $\mathcal{X} \times \mathcal{X}$ cannot be discovered by our procedure.

The type (2) is arguably the most challenging: The confounding directly impact the transition probabilities as $P(\mathcal{X}, \mathcal{Y}) = p(\mathcal{Y} | \mathcal{X}, \mathcal{C}) p (\mathcal{X} | \mathcal{C}) p (\mathcal{C})$, where the conditioning $\mathcal{C}$ is unobserved. Our feeling is that this case brings the model close to a Partially Observable MDP~\citep[POMDP,][]{aastrom1965optimal} formulation. Unfortunately, learning in POMDPs is known to be intractable in general~\citep[e.g.,][]{krishnamurthy2016pac}, which means further structural assumptions shall be considered to deal with this type of confounding.

The type (3) can also be problematic. With this type of confounding, we do not observe all of the causal parents of the variables in $\mathcal{Y}$, which may complicate modeling the transition probabilities in some (rare) cases.

\end{appendix}

\end{document}